
\documentclass[letterpaper]{article}

\usepackage{graphicx} 
\usepackage{subfigure}
\usepackage{amsmath, amsthm}
\usepackage{amssymb}
\usepackage{subfigure}

\usepackage{natbib}

\usepackage{algorithm}
\usepackage{algorithmic}

\usepackage{hyperref}
 \usepackage[accepted]{icml2011}

\makeatletter

\newcommand{\Rmnum}[1]{\expandafter\@slowromancap\romannumeral #1@}
\makeatother

\icmltitlerunning{Learning from Multiple Outlooks}

\begin{document}

\theoremstyle{plain}
\newtheorem*{rep@theorem}{\rep@title}
\newcommand{\newreptheorem}[2]{%
\newenvironment{rep#1}[1]{%
 \def\rep@title{#2 \ref{##1}}%
 \begin{rep@theorem}}%
 {\end{rep@theorem}}}
\makeatother

\newtheorem{theorem}{Theorem}
\newreptheorem{theorem}{Theorem}
\newtheorem{Proof}{Proof}
\newtheorem{lemma}[theorem]{Lemma}
\newtheorem{proposition}[theorem]{Proposition}
\newtheorem{corollary}[theorem]{Corollary}
\newtheorem{definition}[theorem]{Definition}
\newtheorem{assumption}{Assumption}

\theoremstyle{remark}
\newtheorem{example}{Example}
\newtheorem{DsPnt}{DiscussionPoint}
\newtheorem{remark}{Remark}

\twocolumn[
\icmltitle{Learning from Multiple Outlooks}
\icmlauthor{Maayan Harel}{maayanga@tx.technion.ac.il}
\icmladdress{ Department of Electrical Engineering, Technion, Haifa, Israel}
\icmlauthor{Shie Mannor}{shie@ee.technion.ac.il}
\icmladdress{ Department of Electrical Engineering, Technion, Haifa, Israel}

           \vskip 0.3in
]

\begin{abstract}
We propose a novel problem formulation of learning a single task when the data are provided in different feature spaces.
Each such space is called an outlook, and is assumed to contain both labeled and unlabeled data.
The objective is to take advantage of the data from all the outlooks to better classify each of the outlooks.
We devise an algorithm that computes optimal affine mappings from different outlooks to a target outlook by matching
moments of the empirical distributions.
We further derive a probabilistic interpretation of the resulting algorithm and a sample complexity bound indicating how many samples are needed to adequately find the mapping.
We report the results of extensive experiments on activity recognition tasks that show the value of the proposed approach in boosting performance.
\end{abstract}

\section{Introduction}
It is often the case that a learning task relates to multiple representations, to which we refer as \emph{outlooks}. Samples belonging to different outlooks may have varying feature representations and distinct distributions. Furthermore, the outlooks are not related through corresponding instances, but just by the common task.

Multiple outlooks may be found in many real life problems. For example, in activity recognition when data from different users, representing the outlooks, are collected from different sensors.
Note that each outlook may have a totally different feature representations, while the recognition task is common to all outlooks. The ability to learn from these different representations is formulated by multiple outlook learning.  A different example for multiple outlooks learning is classification of document corpora written in different languages. In this case, each language represents a different outlook.
In these situations, the transformations between the outlooks are unknown and feature or sample correspondence is not available.
Consequently, it is rather difficult to learn the task at hand while exploiting the information in different representations.

The goal of multiple outlook learning is to use the information in all available outlooks to improve the learning performance of the task.
We propose to approach this learning problem in a two step procedure. First, we map the empirical distributions of the different outlooks one to another. After the outlooks' distributions are matched, a generic classification algorithm can be applied using the available examples from all the outlooks.

This approach allows to transfer an outlook of which we have little information to another where we have more information. That is, mapping the data to the same space effectively enlarges our sample size and may also give us a better representation of the problem. We show that a classifier learned in the resulting space may outperform each single classifier.

In general, matching multiple distributions, without feature alignment or assuming a parametric model, is a difficult task.
Therefore, we propose to match the empirical moments of the distributions as an approximation.
We present an algorithm for finding one such mapping. The algorithm's objective is to find the optimal affine transformations of the outlooks' spaces, while maintaining isometry within classes. From a geometric point of view, our algorithm is based on matching the centers and the main directions of the outlooks' sample distributions. One virtue of the algorithm is its simple closed form solution.


\section{Related work}
\label{sec:RelatedWork}
Learning from multiple outlooks is related to other setups such as domain adaptation, multiple view learning and manifold alignment. The main challenge in these setups, as in ours, is that the training and test data are drawn from different distributions.

Domain adaptation tries to resolve a common scenario when some changes have been made to the test distribution, while the labeling function of the domains remains more or less the same. Some authors portray this situation by assuming a single hypothesis may classify both domains well \citep{blitzer2007learning}, while others assume the target's posterior probability is equal for the domains \citep{Shimodair2000,huang2007correcting}. The latter assumption is also referred to as the covariate shift problem.

Algorithms for domain adaptation may be roughly divided to three categories. One approach is to re-weigh the training instances so they better resemble the test distribution \citep{Shimodair2000,huang2007correcting}. Such algorithms are derived from the covariate shift assumption, which is in some sense one of the outlook mapping goals. A different approach is to combine the classifiers learnt in each domain \citep{mansour2009domain}.  Last, some works suggest to change the feature representation of the domains. This may be carried out by choosing a subset of features \citep{satpal2007domain},  combination of features \citep{daume2007frustratingly}, or by finding some structural correspondence between features in different domains \citep{blitzer2006domain}. All the described approaches entail an initial common feature representation for the domains. Thus domain adaptation is a special case of the multiple outlook problem, for the case of outlooks with a common feature space. In Section \ref{experiments} we show that our approach can also be applied to this problem.

Multiple outlook learning is also closely related to the multi-view setup \citep{ruping2005learning}. In this setup, each view contains the \emph{same} set of samples represented by different features. Clearly, any multiple view data is also some instance of a multiple outlook data with the added requirement that each sample has observations from multiple outlooks. One common approach is to map a pattern matrix of each view to a consensus pattern by matching corresponding instances \citep{long2008general,hou2009multiple}. Note that in the multiple outlook framework each outlook contains a \emph{unique} set of samples, thus sample to sample correspondence is impossible. \citet{amini2009learning} considers the case when correspondence is missing for some instances, but assumes the existence of a mapping functions between the views.

Multi-view learning is sometimes referred to as manifold alignment. In manifold alignment we look for a transformation of two data sets with sample pairwise correspondence that minimizes the distance between them, in an unsupervised \citep{wang2008manifold} or a semi-supervised \citep{ham2005semisupervised} manner. \citet{wang2009manifold} present manifold alignment without pairwise correspondence. To our knowledge, this is the only work on manifold alignment that does not assume a pairwise matching of the samples. The algorithm presented in this work is not originally suited for classification as our algorithm. 

\section{Mapping Two Outlooks}
\label{sec:Multiple-outlook-Matching}

\subsection{Problem Setting}
 The learner is given two outlooks belonging to separate input spaces $\mathcal{X}_{1}$ and $\mathcal{X}_{2}$ of dimension $d^1$ and $d^2$ respectively, with a common target $\mathcal{Y}=\{1,...,c\}$. We assume that all example pairs of a given outlook $j=1,2$ are independently drawn from an unknown distribution $\mathcal{D}_{j}$, which is unique to each outlook. Denote by $X_{i}^{(1)}$ and $X_{i}^{(2)}$ the data matrices of class $i$ of outlook 1 and 2, respectively. We use superscripts to denote the outlooks' index, and subscripts to denote the classification class.

\subsection{Multiple Outlook MAPping algorithm}
\label{subsec:Matching2Outlooks}
In this section we present our main Multiple Outlook MAPping algorithm (MOMAP) for matching the representations of two outlooks. Throughout the derivations \emph{outlook 2} is mapped to \emph{outlook 1}, which is sometimes referred to as the final outlook. Our goal is to map an outlook where we have ample labeled data, to an outlook where little labeled information is available.

As a preliminary step to the mapping algorithm scaling is applied. The scaling is applied to each of the outlooks separately, and aims to normalize the features of all outlooks to the same range. Note that this stage may be done using unlabeled data when available.

Next, we use the labeled samples to match the two outlooks. The goal of this stage is to map the scaled representations by rotation and translation.  Specifically, the mapping is performed by translating the means of each class to zero, rotating the classes to fit each other well, and then translating the means of the mapped outlook to the final outlook.

Let $\left\{\hat{\mu}_i^{(1)}, \, \hat{\mu}_i^{(2)}\right\}_{i=1}^{c}$ be the set of empirical means of the outlooks. We translate the empirical means of each class of both outlooks to zero:
\begin{equation}
\label{zeroMeans}
\hat{X}_i^{(j)}= X_i^{(j)}-\hat{\mu}_i^{(j)} \quad  i=1,...,c ,\, j=1,2.
\end{equation}
Next, we turn to matching the main directions of the classes by rotation. Note that a rotation matrix may be defined in many manners. We search for mappings in the set of all orthonormal matrices (rotation and reflection). Our choice of mapping by rotation is motivated by its isometry property, which allows us to maintain the relative distance between the samples. We construct utilization matrices for each of the outlooks as follows. Define $D_{i}^{(j)}$ as the utilization matrix of outlook  $j$ and class $i$.  $D_{i}^{(1)}$ and $D_{i}^{(2)}$ are concatenated matrices constructed from the $h\leq \min(d^1,d^2)$ principal directions of the corresponding outlook and class. That is, the $h$ eigenvectors of the empirical covariance matrices $\hat{\Sigma}_{i}^{(1)},\hat{\Sigma}_{i}^{(2)}$ corresponding to the $h$ largest eigenvalues.

Using the utilization matrices we find the rotation matching the outlooks by solving the following optimization problem:
\begin{align}
\{R_{i}\} = \arg\min_{\{R_{i}\}} & \sum_{i=1}^{c}\left\Vert R_{i}D_{i}^{(2)}-D_{i}^{(1)}\right\Vert _{F}^{2}\label{eq:Rotation_opt1}\\
\textrm{subject to: }\, & R_{i}^{T}R_{i}=I \quad i=1,...,c ,\nonumber
 \end{align}
where $\left\Vert \cdot \right\Vert _{F}$ is the Frobenius norm.

To gain some intuition on Problem (\ref{eq:Rotation_opt1}) we disassemble a term in the sum of the objective function
$$\arg\min\left\Vert R_{i}D_{i}^{(2)}-D_{i}^{(1)}\right\Vert _{F}^{2}=\arg\max\sum_{l=1}^{h}\mathbf{v}_{il}^{(1)T}R\mathbf{v}_{il}^{(2)},$$
where ${\bf v}_{il}^{(j)} \, (l=1,...,h)$ are the principal directions of the $i^{th}$ class of outlook $j$. We obtain that Problem (\ref{eq:Rotation_opt1}) is equivalent to maximization of the sum of inner products between the principal directions of outlook 1 and the rotated principal directions of outlook 2,
which in turn implies minimization of the first $h$ principal angles between the classes of both outlooks.

Although Problem (\ref{eq:Rotation_opt1}) is not convex it can be solved in closed form. For the solutions constructed in this stage we borrow techniques from the literature of Procrustes Analysis \citep{gower2004procrustes}.
 Problem (\ref{eq:Rotation_opt1}) is equivalent to
 \begin{align}
& \arg\max_{R_{i}}   \sum_{i=1}^{c} tr\left(R_{i}D_{i}^{(2)}D_{i}^{(1)T}\right)\label{eq:Rotation2views_max}\\
& \textrm{subject to:} \quad  R_{i}^{T}R_{i}=I \quad i=1,...,c.\nonumber
\end{align}
Problem (\ref{eq:Rotation2views_max}) is separable, thus each component in the sum may be optimized separately. In the following derivations we drop the subscript $i$ for brevity.

Let $USV^{T}$ be the singular value decomposition (SVD) of $D^{(2)}D^{(1)T}$. Define $Z=V^{T}RU$.
Then,
\begin{align*}
&  tr\left(RD^{(2)}D^{(1)T}\right)= tr\left(RUSV^{T}\right) = \\
&  tr\left(ZS\right)=\sum_{k=1}^{d}z_{kk}\sigma_{k}\leq\sum_{i=k}^{d}\sigma_{k},
 \end{align*}
where $\sigma_{k}$ is the $k$-th singular value of $D^{(2)}D^{(1)T}$.
The upper bound is attained for $R=VU^{T}$ since in that case $Z=I$ (Algorithm \ref{alg:Proc_2outlooks}).

After the rotation, we translate the classes to match the original means of the final outlook.
The above derivation gives rise to an algorithm that matches two given outlooks.
The algorithm is described in Algorithm \ref{alg:Match2Outlooks}.

\begin{algorithm}[t]
\caption{\emph{Matching two outlooks}}
\label{alg:Match2Outlooks}
\begin{algorithmic}
\STATE \textbf{Input:} empirical moments $\hat{\mu}_{i}^{(j)} \, \forall i,j.$
\FOR{$i=1$ \textbf{to} $c$}
\STATE  $\hat{X}_i^{(j)}= X_{i}^{(j)}-\hat{\mu}_{i}^{(j)} \quad j=1,2.$
\STATE $\tilde{X}_{i}^{(2)} = MatchByRotation(\hat{X}_i^{(1)},\hat{X}_i^{(2)}).$
\STATE $X_{Mapped_i}^{(2)} = \tilde{X}_{i}^{(2)}+\hat{\mu}_{i}^{(1)}.$
\ENDFOR
\STATE \textbf{Output:} $X_{Mapped_i}^{(2)} \quad \forall i$
\end{algorithmic}
\end{algorithm}

\begin{algorithm}[t]
\caption{ \emph{MatchByRotation}}
\label{alg:Proc_2outlooks}
\begin{algorithmic}
\STATE \textbf{Input:} matrices $\hat{X}^{(1)},\hat{X}^{(2)}$.
\STATE Construct matrices $D^{(1)},D^{(2)}$.
\STATE Compute SVD factorization $D^{(2)}D^{(1)T}=USV^{T}.$
\STATE $R=VU^{T}$.
\STATE \textbf{Output:} $\tilde{X}^{(2)} = \hat{X}^{(2)}R^{T}.$
\end{algorithmic}
\end{algorithm}

\begin{remark}
Each outlook need not have the same dimension. In this case, the orthonormal constraint can not be obtained as $R$ is no longer a square matrix.
However, this problem can be easily solved. Suppose that $D_{i}^{(1)}$ and $D_{i}^{(2)}$ have different numbers of rows. Then, simply add rows of zeros to the smaller dimensional configuration until the dimensions are equalized. In this manner, we embed the smaller configuration
in the space of the larger one.
\end{remark}
\begin{remark}
Algorithm \ref{alg:Match2Outlooks} does not rely on any corresponding instances in both outlooks . However, when available, such instances may aid the mapping accuracy and can be easily incorporated into the algorithm. It is possible to do so by adding columns of the corresponding instances to the utilization matrices.
\end{remark}

\section{Extension to Multiple outlooks}
\label{MultiOutlooks}
We present an extension of Algorithm \ref{alg:Match2Outlooks} to the case of multiple outlooks. The multiple outlook scenario allows us to use the information available in all the outlooks to allow better learning of each one. 
To do so, we transform all the outlooks one to another. As for two outlooks, we begin by translating the means of each class of all the outlooks to zero. In the rotation step, the optimal rotations are found by solving
\begin{align}
& \min_{\{R_{i}^{(j)}\}}  \sum_{i=1}^{c}\sum_{k<j}\left\Vert R_{i}^{(k)}D_{i}^{(k)}-R_{i}^{(j)}D_{i}^{(j)}\right\Vert _{F}^{2} \label{eq:Rotation_opt_Multi}\\
&\textrm{subject to:} \quad R_{i}^{(j)T}R_{i}^{(j)}=I \quad \forall i,j.\nonumber
 \end{align}
Observe that Algorithm \ref{alg:Proc_2outlooks} produces an optimal solution with \emph{zero} error, as there is always a perfect rotation between two sets of $h$ orthogonal vectors.  Therefore, one optimal solution of (\ref{eq:Rotation_opt_Multi}), which attains an objective value of zero, is to rotate all outlooks to a chosen final outlook. Namely, for $m$ outlooks $m-1$ rotation matrices are computed for each class. Finally, shift the means of the rotated outlooks to those of the final outlook.

If we want to switch the choice of final outlook, all we need to do is apply the inverse mapping of the relevant outlook to all mapped outlooks. For example, to switch from outlook $s$ to $k$ one needs to apply the following transformation:
$$X_i^{(k)} = R_i^{(k)-1}\left(X_i^{(s)}-\hat{\mu}_i^{(s)}\right)+\hat{\mu}_i^{(k)} \quad \forall i.$$

\section{Analysis}
\label{sec:Analysis}
In this section we give a probabilistic robust interpretation of the rotation process, and prove a sample complexity bound on the convergence of the estimated rotation matrix .
\subsection{Probabilistic Interpretation}
In this section we discuss the effect of adding random noise to the utility matrices on the optimal rotation between two outlooks (Problem (\ref{eq:Rotation_opt1})).
We do not assume knowledge of the probability distribution of the noise. Instead, we use its bounded total value for some chosen confidence level.
We show that the solution to the noised problem is bounded by the sum of the solution to the original problem and a constant value that depends on the noise. Notably, the noise only has an additive effect to the bound.

Let $\Delta$ be the additive random uncertainty to the utility matrix $D_i^{(2)}$ for some class $i$. Suppose that this uncertainty follows an unknown joint distribution $\Delta \sim\mathcal{P}$.  This uncertainty may be portrayed by a chance-constrained extension of Problem (\ref{eq:Rotation_opt1}) \footnote{Since Problem (\ref{eq:Rotation_opt1}) is separable, the extension is done to each class separately. We drop the subscript $i$, representing the class, from the following derivations for brevity.} :
 \begin{align}
& \min_{R^{T}R=I,\tau} \tau  \label{eq:Rotation2viewsSO}\\
&    Pr_{\Delta \sim \mathcal{P}}\left\{   \left\Vert R(D^{(2)}+\Delta)-D^{(1)}\right\Vert _{F}\leq \tau \right\} \geq 1-\eta,\nonumber
\end{align}
where $\eta\in[0,1]$ is the desired confidence level.

Optimization of the chance constrained problem is natural, as it obtains, with high probability, the optimal rotation. However, despite their intuitive probabilistic form, chance constrained problems are generally intractable \citep{shapiro2009lectures}, thus we approximate Problem (\ref{eq:Rotation2viewsSO}) as follows.
We define {\small $ \rho^* = \inf_{\alpha}\left\{ Pr_{\Delta \sim \mathcal{P}}\left( \Vert \Delta \Vert_{F} \leq\alpha\right)\geq 1-\eta \right\}$} and obtain that with probability at least  $1-\eta$\\
 $$ \left\Vert R(D^{(2)}+\Delta)-D^{(1)}\right\Vert _{F} \leq \max_{\Vert \Delta \Vert_{F}\leq\rho*}\left\Vert R(D^{(2)}+\Delta)-D^{(1)}\right\Vert_{F}.$$
Therefore, Problem (\ref{eq:Rotation2viewsSO}) is upper bounded by the following minmax problem
 \begin{equation}
\min_{R^TR=I}\max_{ \Vert \Delta \Vert_{F}\leq\rho*} \left\Vert R(D^{(2)}+\Delta)-D^{(1)} \right\Vert _{F}. \label{eq:minMaxProb}
\end{equation}
This is the \emph{robust} version to the original rotation problem, with the uncertainty set $\mathcal{U} = \left\{ \Delta \, |  \, \Vert \Delta \Vert_{F}\leq\rho* \right\}$ \footnote{The original rotation problem was actually the square of the Frobenius error. However, the two problems are equivalent since taking the square does not change the solution.}. Next, we construct the robust counterpart of (\ref{eq:minMaxProb}).

\begin{theorem}{\label{robustCounterPart}}
Problem (\ref{eq:minMaxProb}) is equivalent to
$$ \min_{R^TR=I} \left( \left\Vert RD^{(2)}-D^{(1)}\right\Vert _{F} \right) +\rho^*.$$
\end{theorem}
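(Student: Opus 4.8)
The plan is to show that for a fixed orthonormal $R$, the inner maximization $\max_{\Vert \Delta \Vert_F \le \rho^*} \Vert R(D^{(2)}+\Delta) - D^{(1)} \Vert_F$ equals exactly $\Vert RD^{(2)} - D^{(1)} \Vert_F + \rho^*$; once this identity is established for every feasible $R$, taking the minimum over $R^T R = I$ on both sides yields the claimed equivalence immediately. So the entire argument reduces to a single inner optimization over the Frobenius ball.

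First I would write the objective as $\Vert (RD^{(2)} - D^{(1)}) + R\Delta \Vert_F$ and apply the triangle inequality for the Frobenius norm to get the upper bound $\Vert RD^{(2)} - D^{(1)} \Vert_F + \Vert R\Delta \Vert_F$. Since $R$ is orthonormal, $\Vert R\Delta \Vert_F = \Vert \Delta \Vert_F \le \rho^*$ (this uses $\mathrm{tr}(\Delta^T R^T R \Delta) = \mathrm{tr}(\Delta^T \Delta)$; in the non-square case one checks $R^T R = I$ still gives this, which is the only place the dimension remark matters). This shows the max is at most $\Vert RD^{(2)} - D^{(1)} \Vert_F + \rho^*$.

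Next I would exhibit a feasible $\Delta$ that attains this bound, i.e. one for which both the triangle inequality is tight and $\Vert \Delta \Vert_F = \rho^*$. Equality in the triangle inequality requires $R\Delta$ to be a nonnegative multiple of $M := RD^{(2)} - D^{(1)}$ (as elements of the matrix inner-product space). So I would set $R\Delta = \rho^*\, M / \Vert M \Vert_F$ when $M \neq 0$, and recover $\Delta = \rho^*\, R^{-1} M/\Vert M\Vert_F$ (using $R^T$ as the appropriate inverse/pseudo-inverse, again invoking orthonormality so that $\Vert \Delta\Vert_F = \rho^*$); when $M = 0$ any $\Delta$ with $\Vert\Delta\Vert_F = \rho^*$ gives value $\rho^* = \Vert M\Vert_F + \rho^*$. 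This establishes the reverse inequality and hence equality.

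The main obstacle is handling the degenerate/non-square cases cleanly: when $M = 0$ the direction of $\Delta$ is not pinned down (but the value still matches), and when the outlooks have unequal dimension $R$ is not genuinely invertible, so I must be careful that $R^T$ plays the role of a left inverse and that $\Vert R\Delta\Vert_F = \Vert \Delta\Vert_F$ still holds under the embedding described in the earlier remark. Apart from that, everything is the triangle inequality plus its equality condition, and the final step — passing the identity through $\min_{R^TR=I}$ — is routine since the additive $\rho^*$ is constant in $R$.
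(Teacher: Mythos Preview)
Your argument is correct and reaches the same identity as the paper, but by a genuinely different route. The paper does not use the triangle inequality directly; instead it invokes the dual characterization $\Vert A\Vert_F=\max_{\Vert V\Vert_F\le 1}\mathrm{tr}(V^TA)$, separates the trace into a term in $RD^{(2)}-D^{(1)}$ and a term in $R\Delta$, bounds the latter via Cauchy--Schwarz plus unitary invariance to obtain $\max_{\Vert\Delta\Vert_F\le\rho^*}\mathrm{tr}(V^TR\Delta)=\rho^*\Vert V\Vert_F$, and then maximizes the resulting linear-plus-norm expression over $V$. Your approach is more elementary: one triangle inequality, one unitary-invariance step, and an explicit maximizer built from the equality case. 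What the paper's duality argument buys is a template that generalizes cleanly to other norm pairs and uncertainty sets common in robust optimization; what your argument buys is brevity and transparency, since the key fact (the worst perturbation points along $M=RD^{(2)}-D^{(1)}$) is visible immediately rather than filtered through the auxiliary variable $V$. Your caution about the non-square case is well placed: both your tightness construction and the paper's rely on $RR^T=I$ (not just $R^TR=I$), which holds only after the zero-padding embedding of Remark~1, so you should state that assumption explicitly rather than leave it to the pseudo-inverse remark.
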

The proof is provided in \ref{proof:robustCounterPart}. The theorem shows that Problem (\ref{eq:Rotation_opt1}) is robust to a perturbation of a total bounded value.
That is, for a bounded noise, the only difference between the solution to the original problem and its robust version (Problem (\ref{eq:minMaxProb})) is an additive constant $\rho^*$.
From a probabilistic point of view, the solution of this problem also provides a bound on the chance constrained problem in (\ref{eq:Rotation2viewsSO}).

\subsection {Sample complexity bounds}
We next provide a bound for the sample complexity of the rotation step of the algorithm.

\begin{assumption} (Gaussian Mixture)
\label{ass:GaussianMixture}
Each outlook is generated by a unique mixture of $c$ Gaussian distributions, where $c$ is the number of classes. The samples
of each outlook are realizations of $x\sim\sum_{i=1}^{c}w_{i}f_{i}(x)$,
where $\, f_{i}(x)\sim\mathcal{N}(\mu_{i},\Sigma_{i})$ and $\sum_{i=1}^{c}w_{i}=1$.
We further assume that $\Vert \mathbb{E}xx^T \Vert\leq1$ for each component.
\end{assumption}

\begin{theorem}\label{Th:SampleComplexity} 
Suppose that Assumption \ref{ass:GaussianMixture} holds. For each outlook, let $\delta,\epsilon_{i} ,\epsilon\in(0,1) ,\, (i=1,..,c)$ and suppose that the number of samples for
each class $i$ satisfies:
{\footnotesize
$$n_i  \geq  C\frac{dh^2}{\epsilon_{i}^{2}}\log^{2}\left(\frac{32dh^2}{\epsilon_{i}^{2}}\right)\log^{2}\left(\frac{4hd}{\delta}\right).$$
}
Then
$$P\left(\left\Vert \hat{R}-R\right\Vert \leq\epsilon\right) \geq  1-\delta,$$
where,  $\hat{R}$ is the estimated rotation matrix found by Algorithm \ref{alg:Proc_2outlooks},
$d$ is the dimension  and $C$ is a constant.
\end{theorem}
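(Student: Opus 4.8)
The plan is to regard $\hat R$ as the image of the sample under a composition of three deterministic maps --- (i) data $\mapsto$ empirical class covariances $\hat\Sigma_i^{(j)}$; (ii) $\hat\Sigma_i^{(j)} \mapsto$ utilization matrix $\hat D_i^{(j)}$ formed from its top $h$ eigenvectors; (iii) $(\hat D_i^{(1)},\hat D_i^{(2)})\mapsto \hat R_i=\hat V_i\hat U_i^{T}$, the orthogonal factor of the SVD $\hat D_i^{(2)}\hat D_i^{(1)T}=\hat U_i\hat S_i\hat V_i^{T}$ --- and to control each link by a matrix perturbation inequality, then compose. Only map (i) is stochastic; (ii) and (iii) are worst-case. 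Since Problem (\ref{eq:Rotation_opt1}) is separable over classes, I fix a class $i$, prove the per-class bound, and (if the full collection is wanted) finish with a union bound over $i=1,\dots,c$, which only changes constants.

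\emph{Step 1 (covariance concentration).} Under Assumption \ref{ass:GaussianMixture} the class-$i$ data of outlook $j$ are Gaussian with $\|\mathbb{E}xx^{T}\|\le 1$, so a standard concentration bound for empirical covariances of (sub-)Gaussian vectors gives
$$P\big(\,\|\hat\Sigma_i^{(j)}-\Sigma_i^{(j)}\|\le\epsilon'\,\big)\ \ge\ 1-\delta' \quad\text{whenever}\quad n_i\ \ge\ C_1\,\frac{d}{\epsilon'^{2}}\,\log^{2}\!\Big(\frac{d}{\epsilon'^{2}}\Big)\log^{2}\!\Big(\frac{1}{\delta'}\Big),$$
the centering by $\hat\mu_i^{(j)}$ contributing only a lower-order $O(\sqrt{d/n_i})$ term absorbed into $C_1$.

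\emph{Steps 2--3 (eigenvectors, then the orthogonal factor).} By Weyl's inequality the top $h$ empirical eigenvalues lie within $\epsilon'$ of the true ones, and by the Davis--Kahan $\sin\theta$ theorem each empirical principal direction satisfies $\|\hat{\mathbf v}_{il}^{(j)}-{\mathbf v}_{il}^{(j)}\|\le C_2\,\epsilon'/\gamma$ (after fixing signs to minimise the distance), where $\gamma$ is the smallest relevant eigengap of $\Sigma_i^{(j)}$ among its top $h+1$ eigenvalues and provided $\epsilon'<\gamma/2$. Since the $\hat D_i^{(j)}$ have orthonormal columns ($\|\hat D_i^{(j)}\|=1$) and $D_i^{(2)}D_i^{(1)T}=\sum_{l=1}^{h}{\mathbf v}_{il}^{(2)}{\mathbf v}_{il}^{(1)T}$, summing the $h$ rank-one perturbations gives
$$\big\|\hat D_i^{(2)}\hat D_i^{(1)T}-D_i^{(2)}D_i^{(1)T}\big\|\ \le\ \sum_{l=1}^{h}\Big(\|\hat{\mathbf v}_{il}^{(2)}-{\mathbf v}_{il}^{(2)}\|+\|\hat{\mathbf v}_{il}^{(1)}-{\mathbf v}_{il}^{(1)}\|\Big)\ \le\ 2C_2\,h\,\epsilon'/\gamma\ =:\ \beta.$$
Now $R_i=V_iU_i^{T}$ is exactly the orthogonal factor in the polar decomposition of $(D_i^{(2)}D_i^{(1)T})^{T}$, so the classical polar-factor perturbation bound yields $\|\hat R_i-R_i\|\le\frac{2}{\sigma_{\min}(D_i^{(2)}D_i^{(1)T})+\sigma_{\min}(\hat D_i^{(2)}\hat D_i^{(1)T})}\,\beta\le(2/\kappa)\beta$ whenever $\beta$ is small enough that the perturbed product stays nonsingular, where $\kappa>0$ lower-bounds $\sigma_{\min}(D_i^{(2)}D_i^{(1)T})$. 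Chaining, $\|\hat R_i-R_i\|\lesssim h\,\epsilon'/(\gamma\kappa)$. Choosing $\epsilon'$ so that the right-hand side is $\le\epsilon$ --- i.e. demanding per-direction accuracy of order $\epsilon/h$, $\epsilon'\asymp\gamma\kappa\,\epsilon/h$ --- makes $1/\epsilon'^{2}\asymp h^{2}/\epsilon^{2}$ up to the problem constants, so Step 1's requirement becomes $n_i\ge C\,\frac{dh^{2}}{\epsilon^{2}}\log^{2}(\frac{dh^{2}}{\epsilon^{2}})\log^{2}(\frac{1}{\delta'})$; taking $\delta'\asymp\delta/(hd)$ for the union bound over the at most $2h$ principal directions, the two outlooks, and the concentration inequality's internal failure terms recovers the factor $\log^{2}(\frac{4hd}{\delta})$, and $\gamma,\kappa,C_2$ together with the mean-estimation term are swallowed into $C$. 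A final union bound over the $c$ classes gives $P(\|\hat R-R\|\le\epsilon)\ge 1-\delta$.

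\emph{Main obstacle.} The delicate part is Steps 2--3: the bound only has content under a spectral-gap condition on the class covariances and a conditioning/identifiability condition on $D_i^{(2)}D_i^{(1)T}$ --- neither is written in the statement, so the work is to impose them cleanly (or argue that $C$ legitimately absorbs $1/\gamma$ and $1/\kappa$) and, when $h<d$ so that $D_i^{(2)}D_i^{(1)T}$ is genuinely rank-deficient and $R_i$ is not unique, to fix a canonical representative --- the natural one reduces (iii) to the orthogonal Procrustes factor between two aligned $h$-frames, which is well-conditioned precisely when the principal angles between the two principal subspaces stay bounded away from $\pi/2$. One must also verify throughout that Weyl, Davis--Kahan and the polar-factor bound are applied inside their regions of validity (perturbation below half the gap; perturbed matrix still nonsingular), and it is exactly these smallness requirements, fed through the union bound, that dictate the precise $\log^{2}$ factors.
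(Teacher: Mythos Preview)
Your proposal is correct and follows essentially the same three-link chain as the paper: covariance concentration $\Rightarrow$ eigenvector perturbation $\Rightarrow$ perturbation of the orthogonal factor of $D^{(2)}D^{(1)T}$. The paper instantiates the links with Rudelson's covariance concentration (Theorem~3.1 of \cite{rudelson2007sampling}, after bounding $\|x_i\|$ via a Gaussian tail argument), Stewart's eigen- and SVD-perturbation theory \cite{Stewart1990Perturb} in Frobenius norm, and a direct $\|\hat V\hat U^{T}-VU^{T}\|_F$ bound rather than the polar-factor inequality; your Davis--Kahan and polar-factor route is an equivalent substitution, and your explicit tracking of the eigengap $\gamma$ and conditioning $\kappa$ makes visible exactly the problem-dependent quantities the paper silently folds into~$C$.
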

The proof of the theorem is provided in \ref{proof:SampleComplexity}. Note that the sample complexity of the mapping algorithm is dominated by the rotation stage. In practice, the number of chosen principal directions $h$ is usually small. Also note that the bound on the norm of the second moment in Assumption \ref{ass:GaussianMixture} is achieved by the scaling stage.

\section{Experiments}
\label{experiments}
In this section we demonstrate our framework on activity recognition data, in which different users represent different outlooks.  In this application, the multiple outlooks setup allows for valuable flexibility in real life recordings. For example, some users may use a simple sensor configuration for recordings, while others use a complex sensor board of multiple sensors. Also, this setup may resolve problems of varying sampling rates when using different hardware and workloads.

In our experiments we test two setups: a domain adaptation setup and a multiple outlook setup. For the domain adaptation setup a common feature representation is used, while for the multiple outlook setup a unique feature space is used for each user.

\subsection{Data set description and feature extraction}
\label{dataSet}
The data set used for the experiments was collected by \citet{subramanya2006recognizing} using a customized wearable sensor system. The system includes a 3-axis accelerometer, phototransistors for measuring light, barometric pressure sensors, and GPS data.
The data consist of recordings from 6 participants who were asked to perform a variety of activities and record the labels. We used the following labels: walking, running, going upstairs, going downstairs and lingering.
After removing data with obvious annotation errors the data consists of about 50 hours of recording, divided approximately evenly among the 6 users.  For each user the activities are roughly divided into $40\%$ walking, $40-50\%$ lingering, $2-5\%$ running, $2-3\%$  going upstairs, and $2-3\%$  going  downstairs.
See \citep{subramanya2006recognizing} for further details on the sensor system and the recordings.

From the raw data we extracted windowed samples as follows.
From the accelerometer data we used the x-axes measurements sampled at 512Hz, which we decimated to 32Hz. The barometric pressure sampled at 7.1Hz, was smoothed and interpolated to 32Hz. Next, we applied a two-second sliding window over each signal using a window of appropriate length. From each window a feature vector is extracted containing the Fourier coefficients of the accelerometer data, the mean of the gradient of the barometric pressure, and the mean values of the light signals. All together we obtained 20-35 thousand samples for each user with  37 features.

As explained in Section \ref{subsec:Matching2Outlooks}, before mapping the outlooks scaling should be applied to all the outlooks. For all the experiments, we scale the data to [0,1]. To reduce the sensitivity of the scaling to outliers we first collapse the extreme two percentile of the data to the value of the extreme remaining values   (also known as Winsorization). Scaling parameters are chosen on the training data and applied to the test data. This preprocessing was applied to all baseline classifiers.

\subsection{ Domain Adaptation Setup}
As mentioned above, multiple outlook learning may also be applied for domain adaptation. We tested both standard domain adaptation of two domains, as well as multiple source domain adaptation.

For the two domain problem we adopted the commonly used terminology in domain adaptation of \emph{source} and \emph{target} domains.  We applied Algorithm \ref{alg:Match2Outlooks} for different fractions of target labeled data and fully labeled source data. The performance was computed by 10-fold cross-validation, each fold containing random samples from each class according to its fraction in the complete set. The only parameter of the algorithm $h$ was chosen on a random split.

We test the success of the mapping algorithm by classification of the target test data with a classifier trained on the mapped source data, denoted as the MOMAP classifier (no target data was used for training). This is a multi-class classification problem, with five possible labels. We use a multi-class SVM classifier with an RBF-kernel ($C=64$, $\gamma=0.25$ \footnote{The parameters were chosen on the target classification problem. Common parameters were chosen for clear performance comparison of the different classifiers.}) obtained by LIBSVM software \citep{CC01a}. The data are unevenly distributed among the five classes, therefore we use the balanced error rate (BER) as a performance measure: $BER=\frac{1}{c}\sum_{i=1}^{c}\frac{1}{n_i}e_i,$ where $e_i$ and $n_i$ are the numbers of errors and number of samples in class $i$ respectively, and $c$ is the number of classes.

We compare the MOMAP classifier to the following baselines: a target only classifier, trained on the available labeled target data (TRG); a source only classifier, trained on the source data (SRC); a classifier trained on all available labeled data of target and source (ALL); and the domain adaptation algorithm presented in \citep{daume2007frustratingly} (FEDA). We also add the "optimal" error, obtained by training on the fully labeled target data (OPT).

The results are presented in Figure  \ref{fig:DA2outlooks}. It can be observed that the MOMAP classifier outperforms the baseline classifiers for most fractions of target labeled data. The algorithm performs well across all sets of users, for example, for $5\%$ labeled data it is significantly better (p-value$<0.05$) than the TRG, SRC and FEDA classifiers for all sets, and significantly better than the ALL classifier for 18 out of 30 possible sets (see Table \ref{Tb:twoUsersDA} in \ref{Res:DA}).
\begin{figure}[t]
\centering
\subfigure[User 5 $\rightarrow$ User 3]{
   \includegraphics[width=0.33\textwidth] {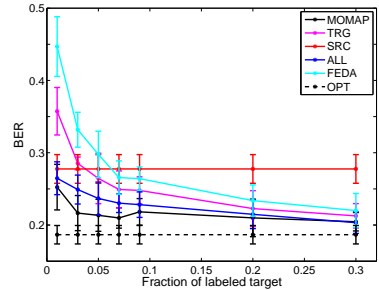}
   \label{subfig:35DA}
 }
 \subfigure[User 6 $\rightarrow$ User 2]{
   \includegraphics[width=0.33\textwidth] {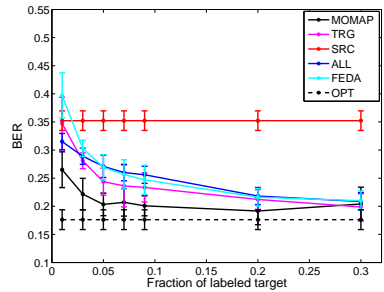}
   \label{subfig:26DA}
 }
\caption{Domain adaptation setup for 2 domains. \label{fig:DA2outlooks}}
\vspace{-3ex}
\end{figure}

In the next experiment we consider mixtures of $m$ source domains with some labeled data (both training and test sets are mixtures).
We use the extension to multiple outlooks presented in Section \ref{MultiOutlooks} to find the mappings of the sources to each outlook. We test the classification performance on each component of the mixture with a classifier trained on all the mapped sources. The final performance measure is the mean BER averaged on all the sources. As in the previous experiment, the evaluation was done by 10-fold cross-validation, with the same classifier. The baselines are similar, with the change of the TRG to the mean value of multiple classifiers trained in each domain, and the ALL baseline to a classifier trained on all sources (the SRC classifier was not relevant). The experiment was performed on all 20 triplet combinations. Sample results are presented in Figure \ref{fig:DAMultoutlooks}. These trends were consistent across users, for example, for $15\%$ of labeled data the MOMAP algorithm outperforms all other classifiers for 15 of the combinations (p-value$<0.05$). In the 5 remaining combinations, the algorithm performed significantly better than the TRG and FEDA algorithms, and equally well as the ALL classifier (see Table \ref{Tb:MultiUsersDA} in \ref{Res:DA}). For larger portions of labeled data the MOMAP algorithm also obtained smaller error than the ALL classifier (p-value$<0.05$). The effect of the ALL classifier may be a result of some regularization obtained from training on data from similar yet different domains.
\begin{figure}[t]
\centering
\includegraphics[trim = 7mm 6mm 20mm 0mm, clip,width=0.3\textwidth,height=0.25\textwidth] {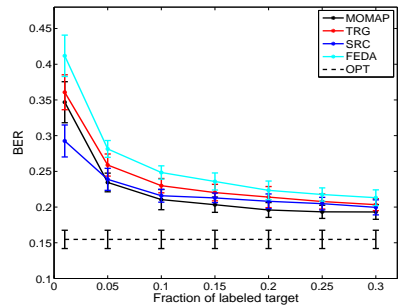}
\caption{Domain adaptation setup for multiple outlooks: users 1,2 and 5. \label{fig:DAMultoutlooks} }
\vspace{-3.5ex}
\end{figure}
\subsection{Multiple Outlook Setup}
We conducted three types of experiments for the multiple outlook setup, each with a different feature representation. The experiments' setup was similar to the previous experiments with some adjustments to the baselines: the SRC, ALL and FEDA baselines were no longer relevant, as the outlooks'  features  differ.

In the first experiment we tested the multiple outlook algorithm on two outlooks for the case of different sensors and added noise features. For the mapped outlook we used full feature representation (37 features). For the target outlook we used the accelerometer's and pressure features, and excluded the light measurements. Instead of the light features we added features with Gaussian random noise ($\mathcal{N}(0,1)$). The experiment was performed on all pair combinations. For $5\%$ labeled data of the learned outlook, the mean BER of the MOMAP was $4.5\%$ ($\pm 2.7\%$) lower than that of the TRG classifier. The results for four user pairs are presented in Figure \ref{fig:nolight}. These results show that the mapping was successful, as training on the mapped data outperforms training on partial data in the target outlook.   In Fig.~\ref{subfig:62nolight} the MOMAP algorithm has lower error than the OPT classifier for some fractions; this may be a result of the added information in the light features.
\begin{figure}[t]
\centering
\subfigure[User 3 $\rightarrow$ User 2 ]{
   \includegraphics[trim = 7mm 5mm 20mm 0mm, clip,width=0.21\textwidth,height=0.21\textwidth] {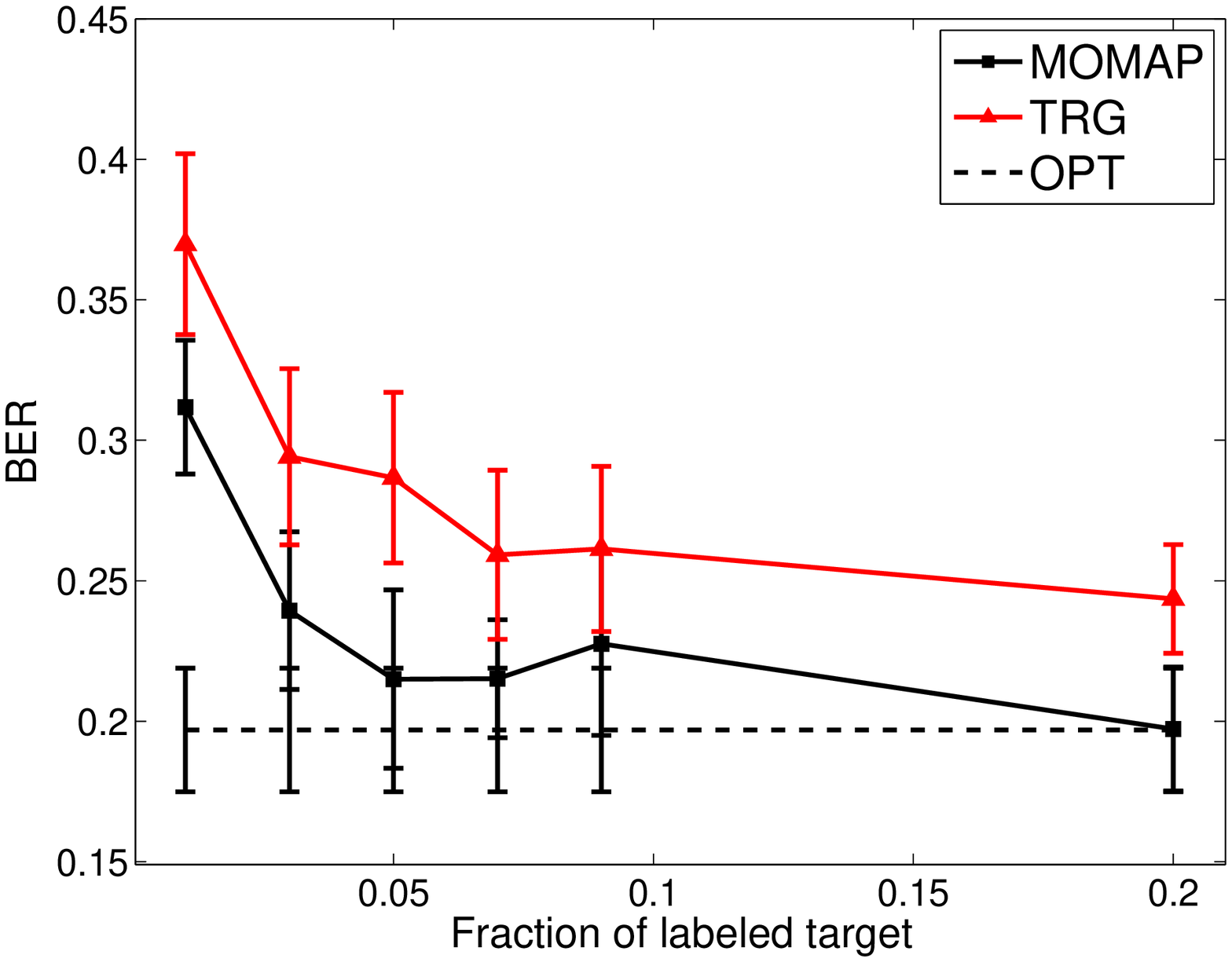} 
   \label{subfig:23nolight}
 }
\subfigure[User 4 $\rightarrow$ User 3]{
   \includegraphics[trim = 7mm 5mm 20mm 0mm, clip,width=0.21\textwidth,height=0.21\textwidth] {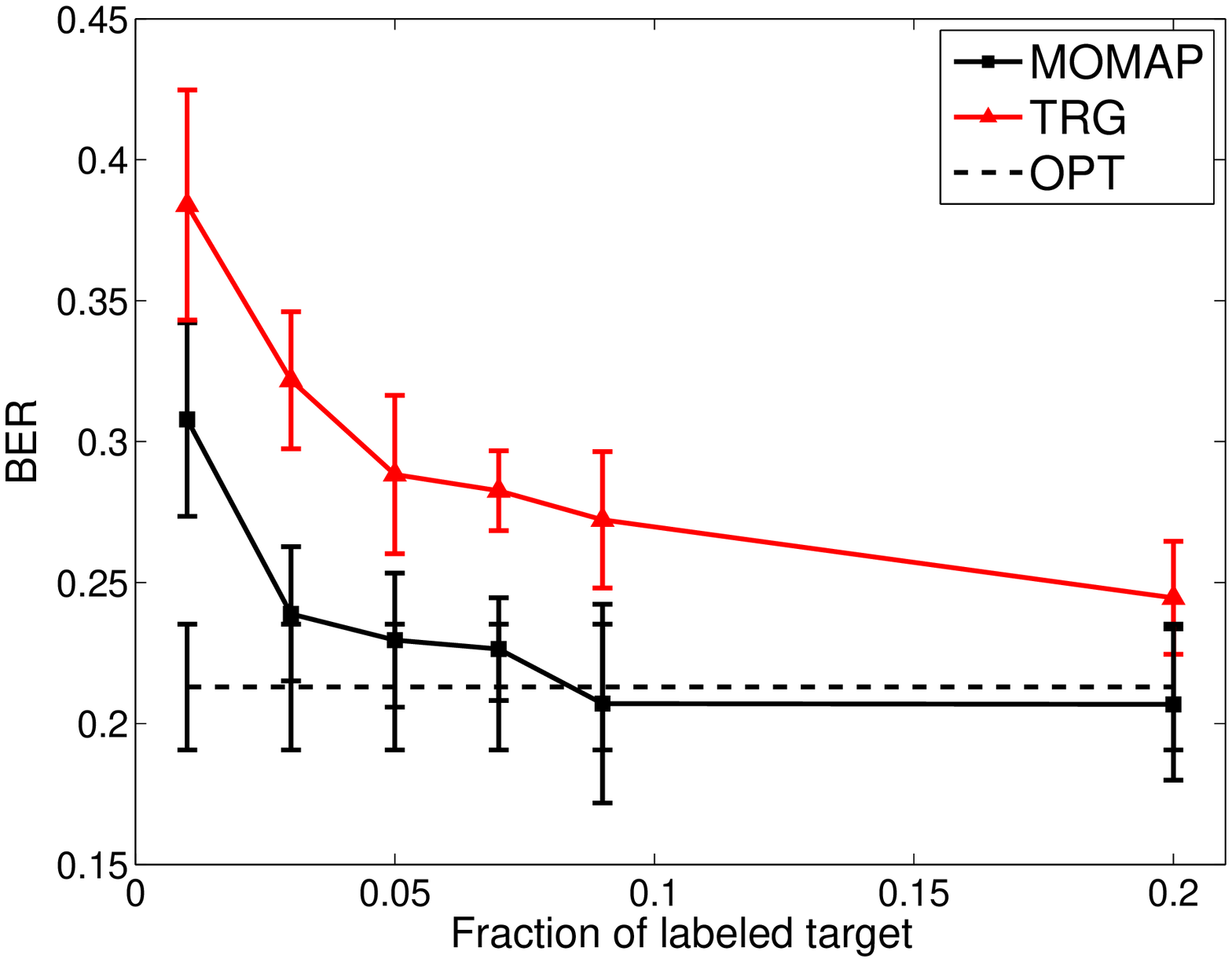}
   \label{subfig:34nolight}
 }
 \subfigure[User 2 $\rightarrow$ User 6]{
   \includegraphics[trim = 7mm 5mm 20mm 0mm, clip,width=0.21\textwidth,height=0.21\textwidth] {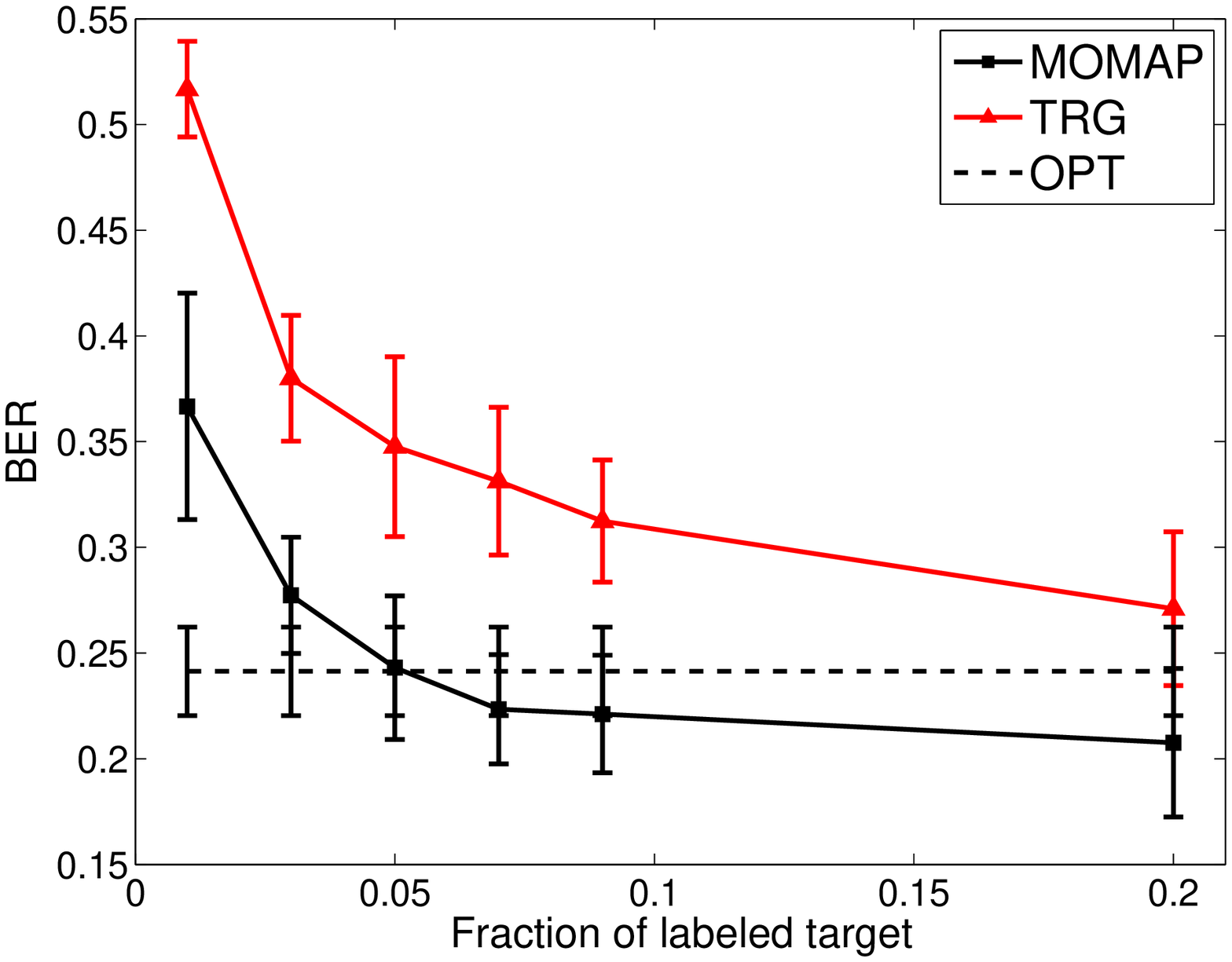} 
   \label{subfig:62nolight}
 }
  \subfigure[User 1 $\rightarrow$ User 5]{
   \includegraphics[trim = 7mm 5mm 20mm 0mm, clip,width=0.21\textwidth,height=0.21\textwidth] {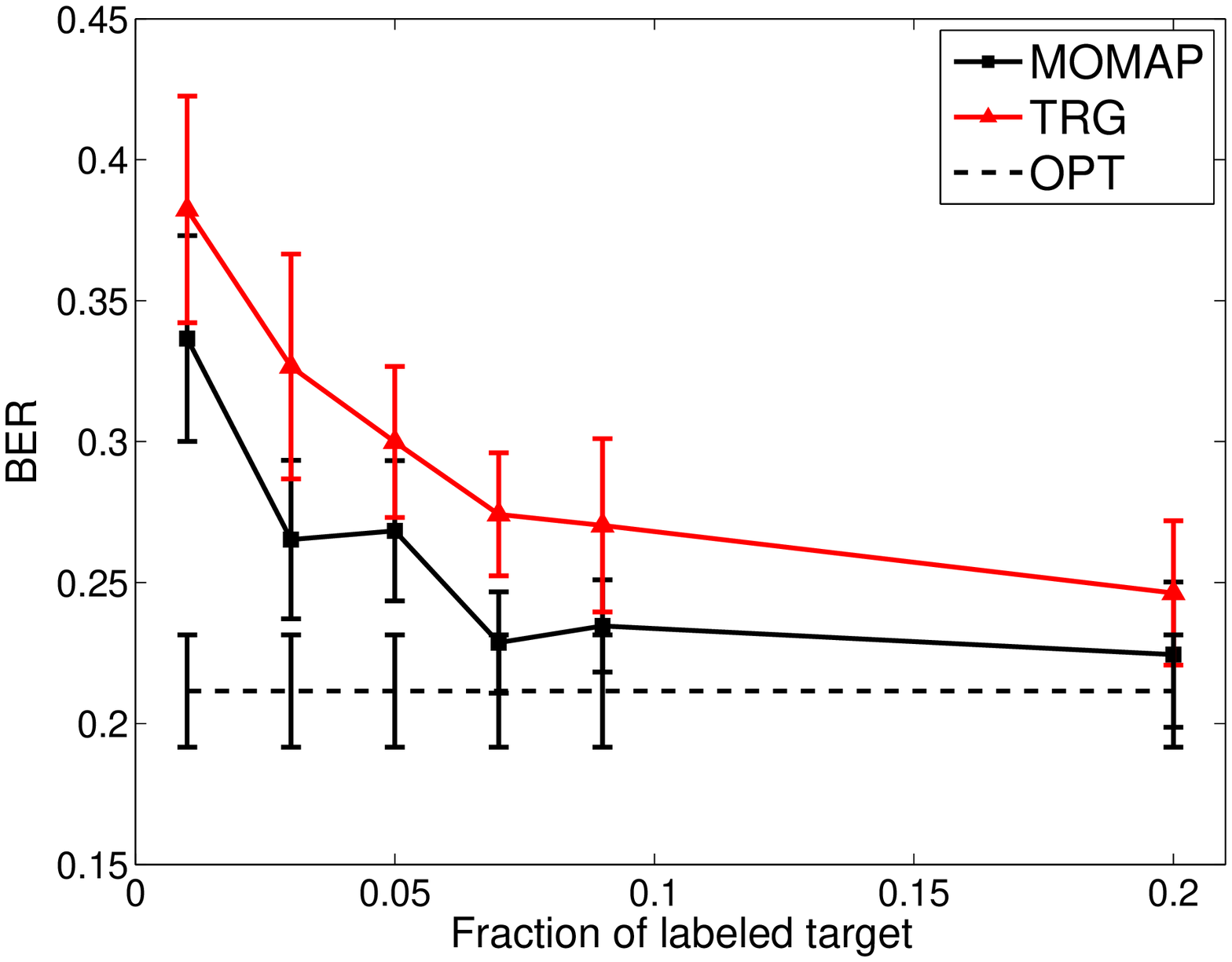} 
   \label{subfig:51nolight}
 }
 \vspace{-0.1cm}
\caption{Two outlooks with different sensors. Final outlook: accelerometer and pressure. Mapped outlook: accelerometer, pressure and light sensors. The missing features in the final outlook are replaces by noise.\label{fig:nolight} }
\vspace{-0.3cm}
\end{figure}

In the second experiment we tried to learn from two outlooks with a different number of features resulting from different sampling rates. Specifically, for the learned outlook we kept the full feature representation as described in Section \ref{dataSet}, while for the mapped outlook we used the same type of features but with 30Hz sampling rate instead of 32Hz. This resulted in 37 features in the target outlook and 35 in the mapped one. Note that our algorithm may be easily modified for this scenario; see Remark 1 in Section \ref{subsec:Matching2Outlooks}. For $5\%$ labeled data the MOMAP algorithm had on average $5.9\%$ ($\pm 2.4\%$) lower BER than the TRG classifier. Figure \ref{fig:diffFq} presents the results on four user pairs. In Figs.~\ref{subfig:13diffFq} and \ref{subfig:64diffFq} the MOMAP algorithm has lower error than the OPT classifier. Observe that this is possible since the balanced error rate is presented, which treats the error in different classes equally (namely, the MOMAP classifier does not outperform the non-balanced error).
\begin{figure}[t]
\centering
\subfigure[User 3 $\rightarrow$ User 1 ]{
   \includegraphics[trim = 7mm 5mm 20mm 0mm, clip,width=0.21\textwidth,height=0.21\textwidth] {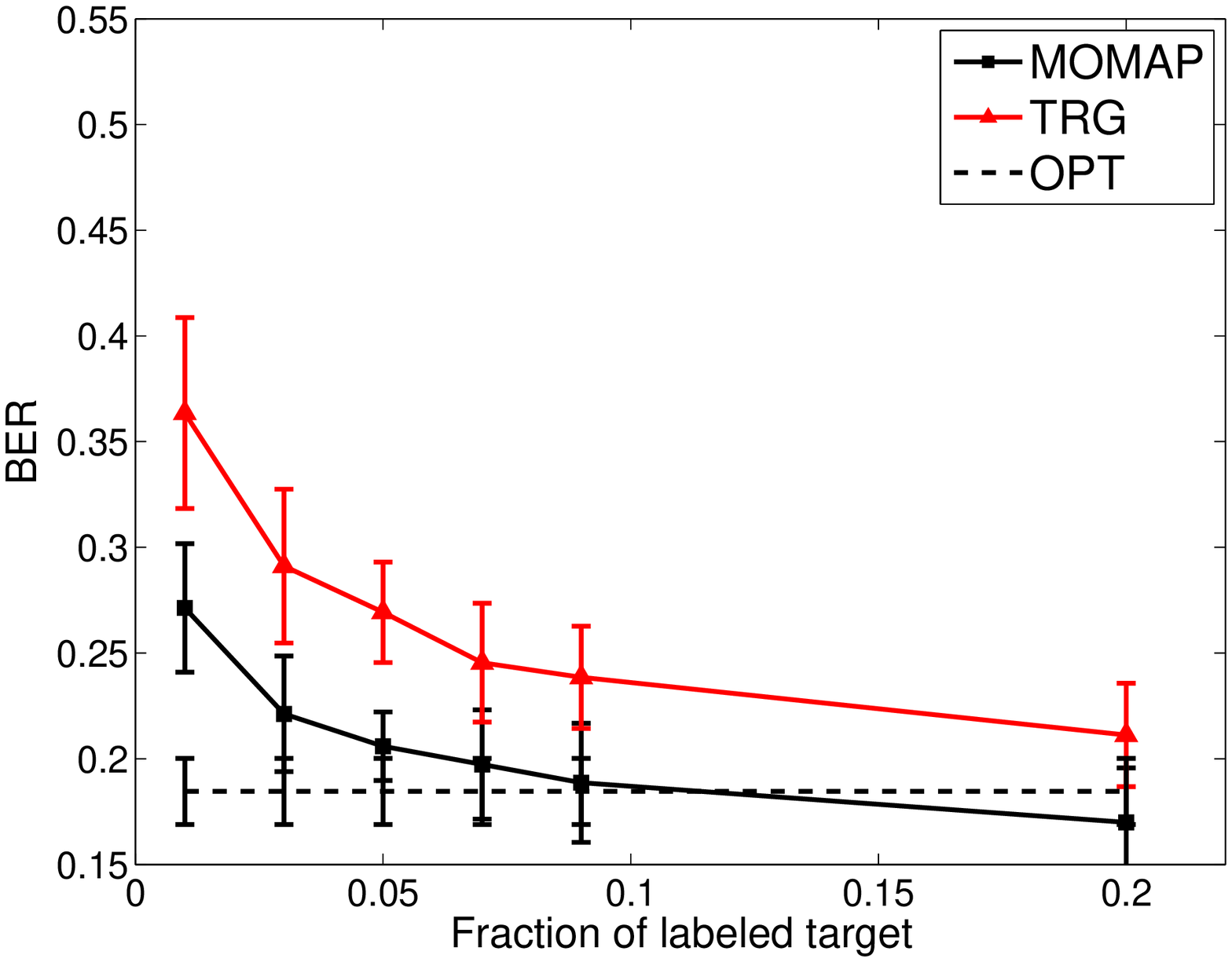}
   \label{subfig:13diffFq}
 }
\subfigure[User 2 $\rightarrow$ User 5]{
   \includegraphics[trim = 7mm 5mm 20mm 0mm, clip,width=0.21\textwidth,height=0.21\textwidth] {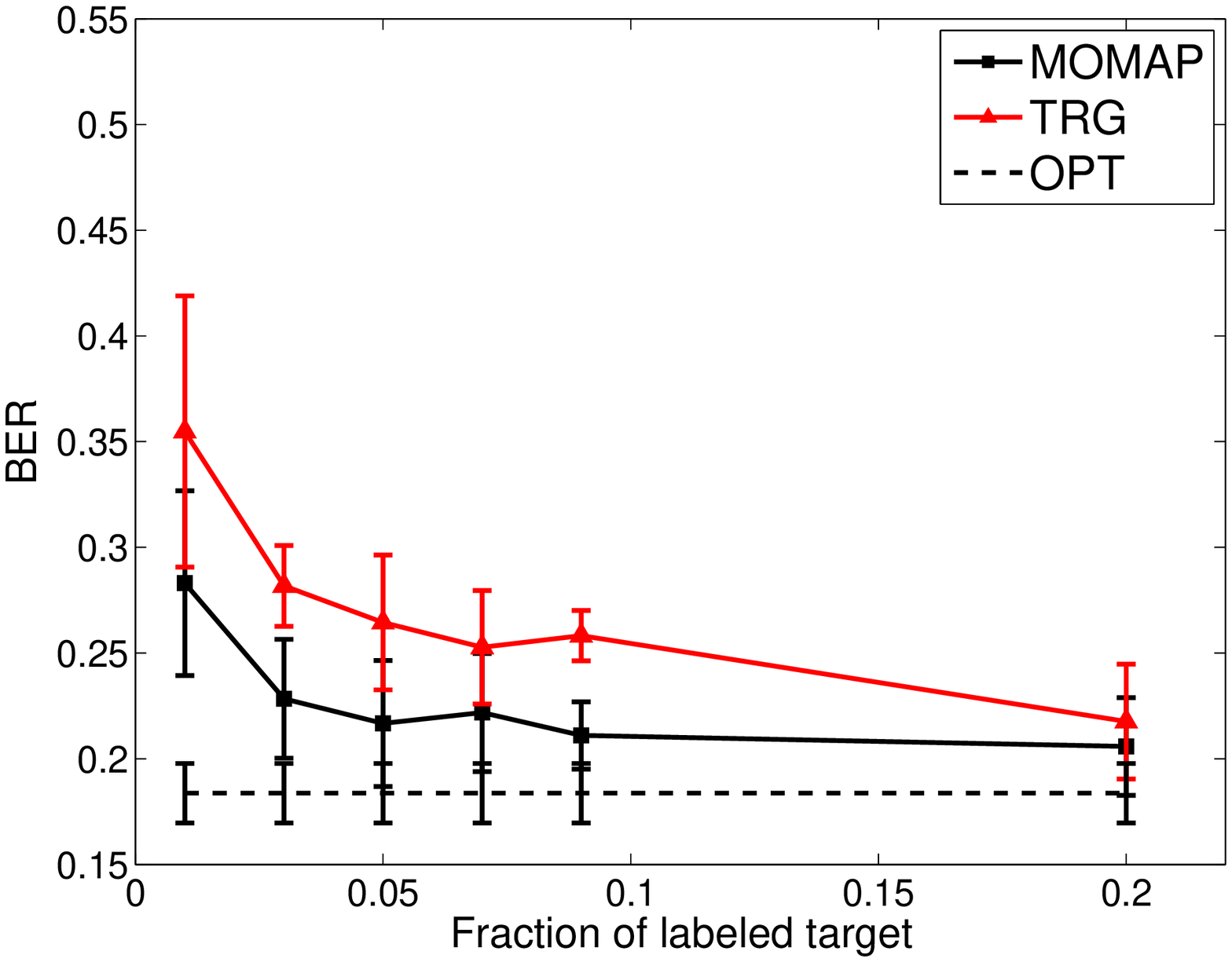}
   \label{subfig:52diffFq}
 }
 \subfigure[User 4 $\rightarrow$ User 6]{
   \includegraphics[trim = 7mm 5mm 20mm 0mm, clip,width=0.21\textwidth,height=0.21\textwidth] {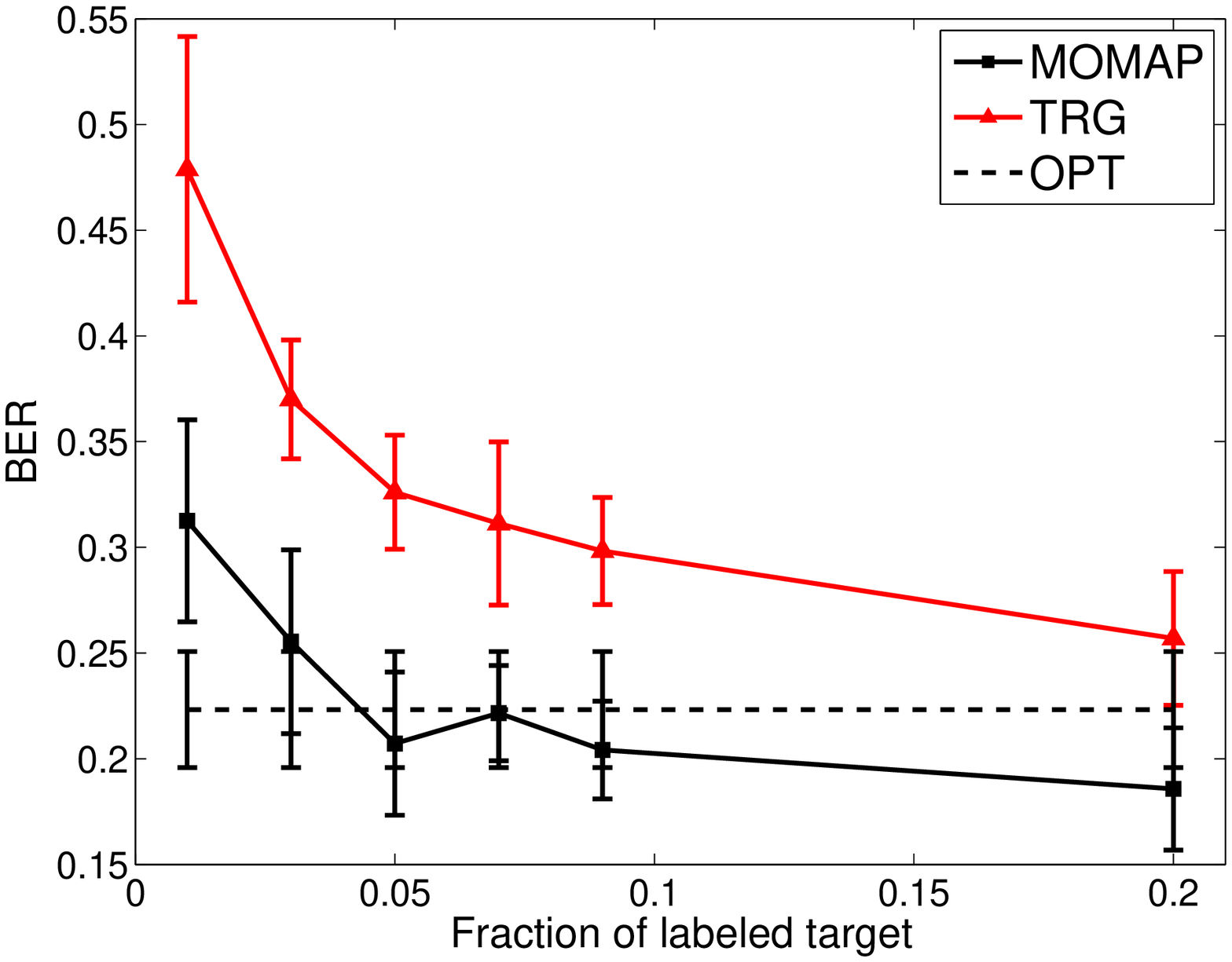}
   \label{subfig:64diffFq}
 }
  \subfigure[User 3 $\rightarrow$ User 4]{
   \includegraphics[trim = 7mm 5mm 20mm 0mm, clip,width=0.21\textwidth,height=0.21\textwidth] {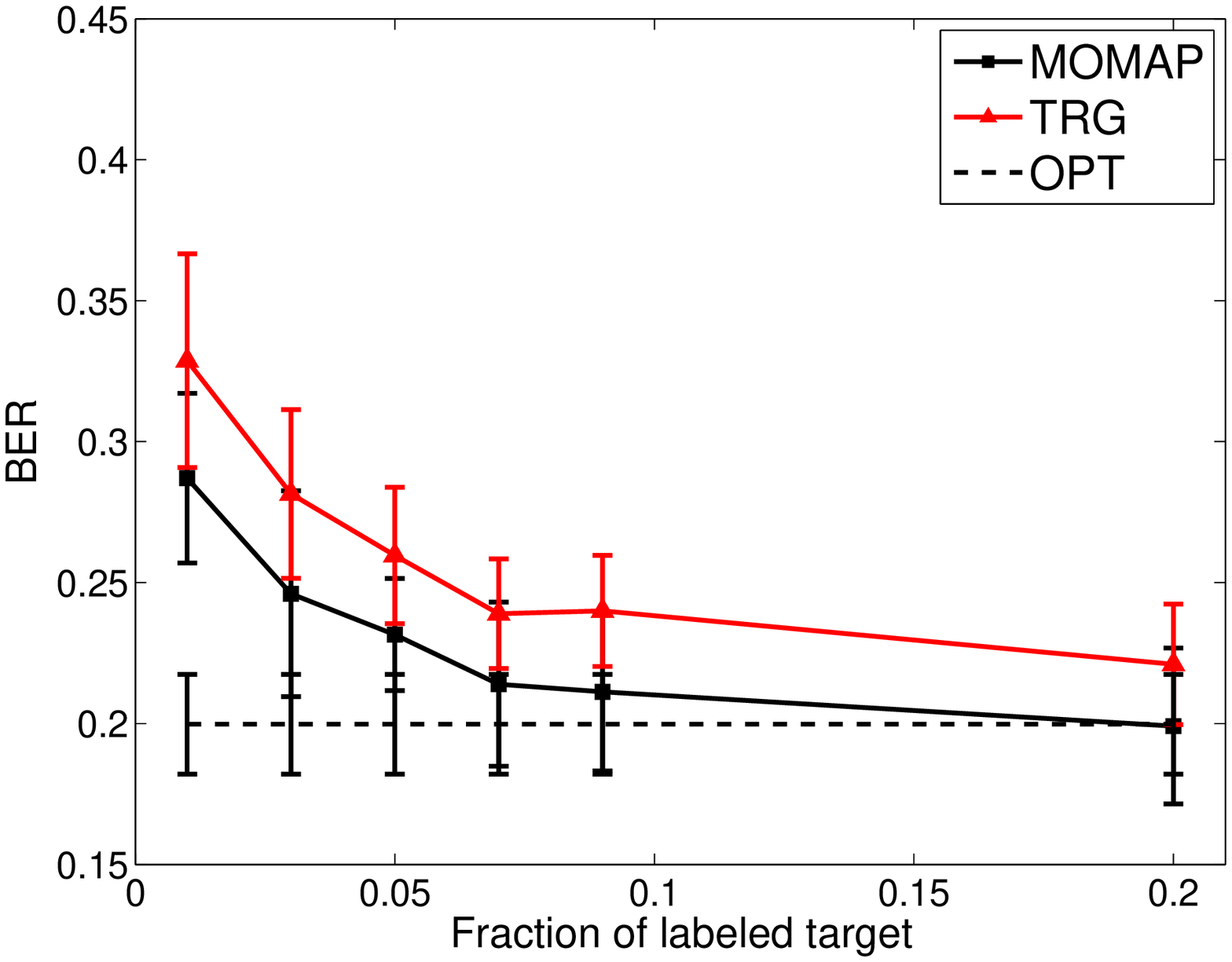}
   \label{subfig:43diffFq}
 }
\caption{Multiple outlook learning for two outlooks with different sampling rates.\label{fig:diffFq} }
\vspace{-3ex}
\end{figure}

In the third experiment we constructed the feature representation of each outlook from the 33 accelerometer's features to which we added 10 features of Gaussian noise ($\mathcal{N}(0,1)$). We then randomly permuted the order of the features of each outlook. For this experiment, we used samples belonging to the walking, running and lingering classes, as we did not use the full feature set. The experiment was performed for two outlooks as well as for multiple outlooks. The results indicate the performance boost from MOMAP especially for the running activity. Due to space limitations we provide the results in \ref{Res:MO}.

\section{Future Work}
Our proposed approach is a first step in developing the methodology for learning from multiple outlooks. This approach may be extended to many interesting directions. First, in this paper we only considered affine mappings between the outlooks and a natural extension is to consider richer classes of transformations such as piecewise linear mappings. Also, our approach is batch in the sense that first all the data have to be processed and then the classification algorithm can be used. A different extension of practical interest would be to develop an online version of the proposed approach that takes samples one by one and gradually improves the mapping.
Finally, a major application domain, of independent interest, is natural language processing. Here the challenge would be to use a language where labels are abundant to better classify in a different language. The main obstacle here seems to be the nature of representation: language data are often represented as sparse vectors which may call for a different type of transformations between the outlooks.
\vspace{-0.6cm}
\bibliographystyle{icml2011}
\bibliography{bibICML2011}

\newpage

\appendix

\section{Appendix}

\subsection{Proof of Theorem \ref{robustCounterPart}}
\label{proof:robustCounterPart}
The next theorem presents the robust counterpart of Problem (\ref{eq:minMaxProb}); the robust version of the optimization for the two outlooks rotation problem (each component in Problem \ref{eq:Rotation_opt1}). We restate the theorem for clarity:

\begin{reptheorem}{robustCounterPart}
Problem (\ref{eq:minMaxProb}) is equivalent to
$$ \min_{R^TR=I} \left( \left\Vert RD^{(2)}-D^{(1)}\right\Vert _{F} \right) +\rho^*.$$
\end{reptheorem}

\begin{proof}
We obtain an explicit expression for the maximization in (\ref{eq:minMaxProb}). By definition, the norm may be written as

{ \small \begin{align}
  & \max_{\Vert\Delta\Vert_{F}\leq\rho*} \left\Vert R(D^{(2)}+\Delta)-D^{(1)} \right\Vert _{F} =\nonumber \\
  & \max_{\Vert\Delta\Vert_{F}\leq\rho* , \Vert V \Vert_F\leq1}  tr\left( V^T(R(D^{(2)}+\Delta)-D^{(1)}) \right) =\nonumber \\
  & \max_{\Vert V \Vert_F\leq1} \left\{tr\left( V^T(RD^{(2)}-D^{(1)})\right)+\max_{\Vert\Delta\Vert_{F}\leq\rho*} tr\left( V^TR\Delta \right)\right\}.\label{RO_2outlooks}
\end{align}}

Next, we develop an explicit representation of the inner maximization over $\Delta$. By applying the Cauchy-Schwartz inequality and the unitary invariance of the Frobenius norm we obtain an upper bound:

 {\small
 $$\max_{\Vert\Delta\Vert_{F}\leq\rho*} tr\left( V^TR\Delta \right)\leq
   \max_{\Vert\Delta\Vert_{F}\leq\rho*}\Vert V\Vert_F \Vert R\Delta\Vert_F = \rho^{*}\Vert V\Vert_F.$$}\\

 Let  $\Delta^*=R^TV/\Vert V\Vert_F$.
 Observe that

{ \small $$ \max_{\Vert\Delta\Vert_{F}\leq\rho^{*}} tr\left( V^TR\Delta \right) \geq tr\left( V^TR\Delta^* \right) =\rho^{*}\Vert V \Vert_F.$$}\\

We conclude, that $\max_{\Vert\Delta\Vert_{F}\leq\rho^{*}} tr\left( V^TR\Delta \right) = \rho^{*}\Vert V \Vert_F.$ Inserting this equation into (\ref{RO_2outlooks}) we obtain:

{\small
\begin{align}
& \max_{\Vert\Delta\Vert_{F}\leq\rho*} \left\Vert R(D^{(2)}+\Delta)-D^{(1)} \right\Vert _{F} = \nonumber\\
& \max_{\Vert V \Vert_F\leq1} \left[tr\left( V^T(RD^{(2)}-D^{(1)})\right)+ \rho^{*}\Vert V \Vert_F\right] = \nonumber\\
& \left\Vert RD^{(2)}-D^{(1)}\right\Vert _{F} +\rho^{*}, \nonumber
\end{align}
}

which concludes the proof.
\end{proof}

\newpage
\subsection{ Proof of Theorem \ref{Th:SampleComplexity} }
\label{proof:SampleComplexity}
We restate the theorem for clarity:
\begin{reptheorem}{Th:SampleComplexity} (Sample complexity of rotation for two outlooks)
Suppose that Assumption \ref{ass:GaussianMixture} hold. Then, for $\delta,\epsilon_{i},\epsilon\in(0,1)$, if the number of samples for
each class and outlook $i$ satisfies:
{\footnotesize
$$n_i  \geq  C\frac{dh^2}{\epsilon_{i}^{2}}\log^{2}\left(\frac{32dh^2}{\epsilon_{i}^{2}}\right)\log^{2}\left(\frac{4hd}{\delta}\right)$$
}
then
$$P\left(\left\Vert \hat{R}-R\right\Vert \leq\epsilon\right) \geq  1-\delta,$$
where,  $\hat{R}$ is the estimated rotation matrix found by algorithm \ref{alg:Proc_2outlooks},
$d$ is the dimension  and $C$ is a constant.
\end{reptheorem}

Before providing the proof we present the following lemmas:
\begin{lemma}[Sample complexity of estimating  mean]
\label{MeanMatch}
Let Assumption \ref{ass:GaussianMixture} hold. Then for $\delta,\epsilon\in(0,1)$ if each class and outlook satisfies:
$ n \geq \frac{2d}{\epsilon^2}\log\left( \frac{d}{\delta} \right)$
then
$$P\left( \left\Vert \hat{\mu} -\mu \right\Vert \leq \epsilon\right) \geq 1-\delta,$$
where $\hat{\mu}$ and $\mu$ are the empirical and true mean of each component of the mixture.
\end{lemma}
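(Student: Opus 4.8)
The plan is to treat this as a routine Gaussian concentration bound applied coordinate-wise. First I would fix a class $i$ and an outlook and condition on class membership: under Assumption \ref{ass:GaussianMixture} the samples assigned to that class are i.i.d.\ draws from the single Gaussian component $\mathcal{N}(\mu,\Sigma)$ (I drop the class and outlook indices, as the lemma statement does). Then the empirical mean satisfies $\hat\mu-\mu=\frac1n\sum_{k=1}^n (x_k-\mu)$, which is exactly distributed as $\mathcal{N}\!\left(0,\tfrac1n\Sigma\right)$; in particular the $j$-th coordinate $(\hat\mu-\mu)_j$ is a centered Gaussian with variance $\Sigma_{jj}/n$.

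The next step extracts a uniform bound on these coordinate variances from the second-moment hypothesis. Since $\mathbb{E}xx^T=\Sigma+\mu\mu^T$ and $\mu\mu^T\succeq 0$, we have $0\preceq \Sigma\preceq \mathbb{E}xx^T$, hence $\|\Sigma\|\le\|\mathbb{E}xx^T\|\le1$; because a diagonal entry of a positive semidefinite matrix is bounded by its spectral norm, $\Sigma_{jj}=e_j^T\Sigma e_j\le1$ for every $j$. So each $(\hat\mu-\mu)_j$ is a centered Gaussian with variance at most $1/n$.

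I would then apply the standard Gaussian tail bound $P(|(\hat\mu-\mu)_j|>t)\le 2\exp\!\left(-nt^2/(2\Sigma_{jj})\right)\le 2\exp\!\left(-nt^2/2\right)$ with $t=\epsilon/\sqrt d$, take a union bound over the $d$ coordinates, and use $\|v\|\le\sqrt d\,\|v\|_\infty$ to conclude $P(\|\hat\mu-\mu\|>\epsilon)\le 2d\exp\!\left(-n\epsilon^2/(2d)\right)$. Requiring the right-hand side to be at most $\delta$ and solving for $n$ yields $n\ge\frac{2d}{\epsilon^2}\log(2d/\delta)$, which is the asserted bound up to the harmless constant inside the logarithm; using the one-sided Gaussian tail (which removes the factor $2$ in front of the exponential) recovers exactly $n\ge\frac{2d}{\epsilon^2}\log(d/\delta)$.

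There is no genuine obstacle here: the only points that need care are the passage from the spectral-norm hypothesis in Assumption \ref{ass:GaussianMixture} to the coordinate-wise variance bound $\Sigma_{jj}\le1$, and the bookkeeping of constants when converting an $\ell_\infty$ deviation into an $\ell_2$ deviation. A slightly sharper route — writing $n\|\hat\mu-\mu\|^2=\sum_{j}\lambda_j Z_j^2$ as a weighted sum of independent $\chi^2_1$ variables, where the $\lambda_j$ are the eigenvalues of $\Sigma$ (all $\le1$) and invoking a Laurent--Massart $\chi^2$ concentration inequality — would replace the $d\log(d/\delta)$ dependence by $d+\log(1/\delta)$, but the elementary coordinate-wise union bound already suffices to establish the lemma as stated.
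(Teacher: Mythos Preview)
Your proposal is correct and follows essentially the same route as the paper's proof: a coordinate-wise Gaussian (Chernoff) tail bound, a union bound over the $d$ coordinates, and the use of $\|\mathbb{E}xx^T\|\le1$ to cap each coordinate variance by $1$ (the paper states this last step as $\sigma_{\max}^2\le1$ without your explicit $\Sigma\preceq\mathbb{E}xx^T$ derivation). One small quibble: since you need $|(\hat\mu-\mu)_j|>t$, the two-sided tail is genuinely required, so the factor $2$ inside the logarithm cannot be removed by passing to a one-sided bound; the paper's stated constant omits it as well, so this is a shared harmless imprecision rather than a flaw in your argument.
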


\begin{proof}
We use $\sigma_{\max}^{2}=\max_{k}\left(\sigma_{k}^{2}\right)$ as the maximal directional variance of the $j^{th}$ mixture , and $\sigma_{k}$ as the standard deviation of the samples $k^{th}$ coordinate.
By applying Chernoff's method on each coordinate of $|\hat{\mu}_k-\mu_k|, \, k = 1,...,d$ and then applying the union bound we obtain that for $n\geq\frac{2\sigma_{\max}^{2}d}{\epsilon^{2}}\log\left(\frac{d}{\delta}\right)$
$\left\Vert \hat{\mu}-\mu\right\Vert _{2} \leq \epsilon$
holds with probability of at least $1-\delta$.
The bound is obtained by applying $ \sigma_{\max}^{2} \leq 1$, which is implied from Assumption \ref{ass:GaussianMixture}
\end{proof}

\begin{lemma}\label{lem:distFromMu}
Let X be a set of $n$ points drawn from a one dimensional Gaussian
with mean $\mu$ and variance $\sigma^{2}$. With probability $1-\delta$,
$$ \left|x-\mu\right| \leq \sigma\sqrt{2\log\left(\frac{n}{\delta}\right)} \quad \forall x\in X.$$
\end{lemma}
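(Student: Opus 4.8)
The plan is to combine a sharp Gaussian tail bound with a union bound over the $n$ points of $X$. First I would reduce to the standard normal: writing each point as $x = \mu + \sigma Z$ with $Z\sim\mathcal{N}(0,1)$, the event $\{|x-\mu|\le \sigma\sqrt{2\log(n/\delta)}\}$ is exactly $\{|Z|\le\sqrt{2\log(n/\delta)}\}$, so it suffices to control the upper tail of a single standard normal and then account for the other $n-1$ points.

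Next I would establish the two-sided tail estimate $P(|Z|\ge t)\le e^{-t^2/2}$ for every $t\ge 0$. A clean derivation writes $P(|Z|\ge t)=\sqrt{2/\pi}\int_t^{\infty} e^{-u^2/2}\,du$, substitutes $u=v+t$, and uses $e^{-v^2/2-vt}\le e^{-v^2/2}$ for $v\ge 0$; the residual integral equals $\sqrt{\pi/2}$, yielding the bound. The reason for using this sharp form, rather than the cruder Chernoff bound $P(|Z|\ge t)\le 2e^{-t^2/2}$, is that it saves exactly the factor of $2$ needed to land at confidence $1-\delta$ after the union bound; a Chernoff argument would only give $1-2\delta$.

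Then I would set $t=\sqrt{2\log(n/\delta)}$, so that $e^{-t^2/2}=\delta/n$, and apply the tail bound to each point: for every fixed $x\in X$, $P\bigl(|x-\mu|>\sigma\sqrt{2\log(n/\delta)}\bigr)\le \delta/n$. A union bound over the $n$ points of $X$ then gives $P\bigl(\exists\,x\in X:\ |x-\mu|>\sigma\sqrt{2\log(n/\delta)}\bigr)\le \delta$, whose complement is precisely the asserted event, which completes the proof.

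There is no genuine obstacle here; the lemma is routine. The only point requiring a little care is using the sharp constant in the Gaussian tail so that the union bound produces probability $1-\delta$ exactly. If a factor-of-two loss in $\delta$ were tolerable, one could instead quote the textbook sub-Gaussian tail bound directly and skip the explicit integral estimate.
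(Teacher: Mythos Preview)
Your proof is correct. The paper states this lemma without proof, treating it as a standard Gaussian concentration fact; its only use later (in bounding sample norms) invokes the conclusion in the equivalent form $P(|x-\mu|\ge\epsilon)\le n\exp\!\bigl(-\epsilon^{2}/(2\sigma^{2})\bigr)$, which is exactly what your union-bound argument produces. Your explicit derivation of the sharp two-sided tail $P(|Z|\ge t)\le e^{-t^{2}/2}$ via the substitution $u=v+t$ is a clean way to recover the constant the stated bound requires, and is a detail the paper does not supply.
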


\begin{lemma}\label{lem:sample_norm_bound}
Let $x_{1},...,x_{n}$ be a set of independent realizations of random vectors from a multivariate normal distribution in $\mathbb{R}^{d}$.
Then with probability of at least $1-\delta$,
$$\left\Vert x_{i}\right\Vert  \leq \left\Vert \mu\right\Vert +\sigma\sqrt{2d\log\left(\frac{nd}{\delta}\right)}.$$
\end{lemma}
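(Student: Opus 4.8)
The plan is to reduce the multivariate norm bound to a coordinate-wise deviation bound that is already supplied by Lemma \ref{lem:distFromMu}. First I would write each sample as $x_i = \mu + z_i$, where $z_i = x_i - \mu$ is a centered Gaussian vector, and apply the triangle inequality to obtain $\|x_i\| \leq \|\mu\| + \|z_i\|$. This isolates the task: it suffices to show that, with probability at least $1-\delta$, every centered sample satisfies $\|z_i\| \leq \sigma\sqrt{2d\log(nd/\delta)}$, where $\sigma = \sigma_{\max}$ denotes the largest coordinate standard deviation (the same convention used in Lemma \ref{MeanMatch}).

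The core step is a coordinate-wise application of Lemma \ref{lem:distFromMu}. For each fixed coordinate $k \in \{1,\ldots,d\}$, the values $x_{1k},\ldots,x_{nk}$ are $n$ draws from a one-dimensional Gaussian with mean $\mu_k$ and variance $\sigma_k^2 \leq \sigma^2$. Invoking Lemma \ref{lem:distFromMu} with confidence parameter $\delta/d$ gives that, with probability at least $1-\delta/d$,
$$|x_{ik}-\mu_k| \leq \sigma_k\sqrt{2\log(nd/\delta)} \leq \sigma\sqrt{2\log(nd/\delta)} \quad \text{for all } i.$$
A union bound over the $d$ coordinates then guarantees that all $nd$ coordinate deviations obey this inequality simultaneously, with total failure probability at most $\delta$.

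Finally I would aggregate these coordinate bounds into the desired norm bound. On the good event, for every $i$,
$$\|z_i\|^2 = \sum_{k=1}^d |x_{ik}-\mu_k|^2 \leq d \cdot \sigma^2 \cdot 2\log(nd/\delta),$$
so that $\|z_i\| \leq \sigma\sqrt{2d\log(nd/\delta)}$, and combining with the triangle inequality of the first step yields exactly the claimed statement.

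The argument is essentially routine once Lemma \ref{lem:distFromMu} is available; the only point requiring care is the bookkeeping of the confidence budget. Allocating $\delta/d$ per coordinate (rather than $\delta$) is precisely what produces the $\log(nd/\delta)$ term inside the square root rather than $\log(n/\delta)$, and using $\sigma = \sigma_{\max}$ to dominate each $\sigma_k$ is what allows a single $\sigma$ to appear in the final bound despite a general, possibly non-isotropic, covariance. I expect the main subtlety to be stating precisely that the bound holds for all $i$ simultaneously, so that it can subsequently serve as a uniform per-sample bound within the proof of Theorem \ref{Th:SampleComplexity}.
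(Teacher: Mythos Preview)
Your proposal is correct and follows essentially the same route as the paper: the paper also reduces to $\|x_i-\mu\|$ via the (reverse) triangle inequality, invokes Lemma~\ref{lem:distFromMu} coordinate-wise with budget $\delta/d$, and then takes a union bound over the $d$ coordinates to obtain $\|x_i-\mu\|\le \sigma\sqrt{2d\log(nd/\delta)}$. The only cosmetic difference is that the paper phrases the per-coordinate bound as $|x_i^{(k)}-\mu_k|\le \epsilon/\sqrt{d}$ before aggregating, whereas you sum the squared coordinate deviations explicitly.
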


\begin{proof}
By the reverse triangle inequality we have that
$$ \left\Vert x_{i}\right\Vert -\left\Vert \mu\right\Vert \leq\left|\left\Vert x_{i}\right\Vert -\left\Vert \mu\right\Vert \right|\leq\left\Vert x_{i}-\mu\right\Vert .$$
By applying Lemma \ref{lem:distFromMu} on a single coordinate of the random vectors $x_{i}$ we get
{\small
$$P\left(\left|x_{i}^{(k)}-\mu_{k}\right|\geq\frac{\epsilon}{\sqrt{d}}\right) \leq n\textrm{exp}\left(-\frac{1}{2}\frac{\epsilon^{2}}{\sigma^{2}d}\right)\leq\frac{\delta}{d}.$$
}
Taking the union bound over the $d$ coordinates we get that with probability at least $1-\delta$
\begin{eqnarray*}
\left\Vert x_{i}\right\Vert -\left\Vert \mu\right\Vert  & \leq & \left\Vert x_{i}-\mu\right\Vert \leq\sigma\sqrt{2d\log\left(\frac{nd}{\delta}\right)}.\end{eqnarray*}
\end{proof}

\begin{lemma}[Sample complexity of covariance estimation]
\label{Cov_est}
Let $X$ be a set of random samples generated from a Gaussian distribution with covariance $\Sigma$ and zero mean $\mu=0.$
Define $\hat{\Sigma},\,\hat{\mu}$ as the estimated covariance matrix and mean of the sample.
Then for $\delta,\,\epsilon_{1},\,\epsilon_{2}\in(0,1)$, for a sample size of
$$n  \geq  C\frac{d}{\epsilon_{2}^{2}}\log^{2}\left(\frac{2d}{\epsilon_{2}^{2}}\right)\log^{2}\left(\frac{2d}{\delta}\right)$$
we have that
$$P\left(\left\Vert \hat{\Sigma}-\Sigma \right\Vert \leq\epsilon_{1}+\epsilon_{2}\right) \geq  1-\delta.$$
\end{lemma}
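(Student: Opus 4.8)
The plan is to decompose the error as $\hat{\Sigma}-\Sigma=\big(\tfrac1n\sum_{i=1}^n x_ix_i^T-\Sigma\big)-\hat{\mu}\hat{\mu}^T$ (legitimate because $\mu=0$) and bound the two pieces separately. The mean–correction piece is the easy one: since $\mu=0$ we have $\Vert\hat{\mu}\Vert=\Vert\hat{\mu}-\mu\Vert$, so Lemma \ref{MeanMatch} gives $\Vert\hat{\mu}\hat{\mu}^T\Vert=\Vert\hat{\mu}\Vert^2\le\epsilon_1$ with probability $1-\delta/3$ once $n$ is of order $\tfrac{d}{\epsilon_1}\log(d/\delta)$ — which is no larger than the covariance requirement in the regime $\epsilon_1\asymp\epsilon_2$ that is used in the application. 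The whole problem therefore reduces to showing that the empirical second moment concentrates: $\big\Vert\tfrac1n\sum_i x_ix_i^T-\Sigma\big\Vert\le\epsilon_2$ with probability $1-2\delta/3$.

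For the second–moment term I would argue by truncation followed by a matrix concentration inequality, which is where the two $\log^2$ factors come from. First apply Lemma \ref{lem:sample_norm_bound} with $\mu=0$: with probability $1-\delta/3$ every sample obeys $\Vert x_i\Vert\le R$ for $R:=\sigma\sqrt{2d\log(2nd/\delta)}$, where $\sigma^2\le1$ since $\mu=0$ forces $\Sigma=\mathbb{E}xx^T$ and Assumption \ref{ass:GaussianMixture} bounds its norm — hence also its top eigenvalue $\sigma_{\max}^2$ — by $1$. On this event the centered i.i.d.\ summands $M_i:=\tfrac1n\big(x_ix_i^T-\mathbb{E}[x_ix_i^T\mathbf{1}\{\Vert x\Vert\le R\}]\big)$ have operator norm at most $2R^2/n$ and total variance $\big\Vert\sum_i\mathbb{E}[M_i^2]\big\Vert\le R^2\Vert\Sigma\Vert/n\le R^2/n$, so the matrix Bernstein inequality gives
$$P\!\left(\Big\Vert\tfrac1n\sum_i x_ix_i^T-\mathbb{E}[x_ix_i^T\mathbf{1}\{\Vert x\Vert\le R\}]\Big\Vert\ge\tfrac{\epsilon_2}{2}\right)\le 2d\exp\!\left(-\frac{c\,n\epsilon_2^2}{R^2}\right)$$
for a universal constant $c$, which is $\le\delta/3$ once $n\gtrsim\tfrac{R^2}{\epsilon_2^2}\log(d/\delta)$. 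The remaining discrepancy is the truncation bias $\big\Vert\mathbb{E}[xx^T\mathbf{1}\{\Vert x\Vert>R\}]\big\Vert$, which is $O(\sqrt{\delta/n})$ by a crude Gaussian–tail estimate on $P(\Vert x\Vert>R)$ together with $\mathbb{E}(u^Tx)^4=3(u^T\Sigma u)^2\le3$, hence $\le\epsilon_2/2$ for any non-trivial $n$. Summing the three contributions bounds $\big\Vert\tfrac1n\sum_i x_ix_i^T-\Sigma\big\Vert$ by $\epsilon_2$.

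What remains is bookkeeping: the requirement $n\gtrsim\tfrac{R^2}{\epsilon_2^2}\log(d/\delta)=\tfrac{2d\sigma^2\log(2nd/\delta)}{\epsilon_2^2}\log(d/\delta)$ is implicit in $n$. Using $\sigma^2\le1$ and the elementary fact that $n\ge a\log(bn)$ holds whenever $n\ge 2a\log(2ab)$, one substitutes $a\asymp\tfrac{d}{\epsilon_2^2}\log(d/\delta)$, $b\asymp d/\delta$, and checks that the inequality is met once $n\ge C\tfrac{d}{\epsilon_2^2}\log^2\!\big(\tfrac{2d}{\epsilon_2^2}\big)\log^2\!\big(\tfrac{2d}{\delta}\big)$ for a suitable universal $C$ — exactly the stated bound. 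A final union bound over the failure events of Lemma \ref{MeanMatch}, Lemma \ref{lem:sample_norm_bound}, and the matrix Bernstein step caps the total failure probability at $\delta$.

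\textbf{Main obstacle.} The delicate step is the matrix–concentration part and the attendant bookkeeping: one must carry the truncation radius $R$ (which itself contributes a $\log(nd/\delta)$), feed it into both the uniform bound $2R^2/n$ and the variance $R^2/n$ needed by matrix Bernstein, confirm that the truncation bias is genuinely lower order rather than of size $\epsilon_2$, and then solve the resulting self-referential inequality for $n$ without letting the constant blow up — it is precisely this unrolling that produces the $\log^2(\cdot)\log^2(\cdot)$ shape. A minor but necessary sanity check is that $\Vert\Sigma\Vert\le1$ (and hence $\sigma_{\max}^2\le1$) really does follow from Assumption \ref{ass:GaussianMixture} once $\mu=0$.
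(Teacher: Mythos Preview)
Your proposal is correct and follows the same architecture as the paper --- split $\Vert\hat{\Sigma}-\Sigma\Vert$ into $\Vert\hat{\mu}\hat{\mu}^T\Vert$ (controlled via Lemma~\ref{MeanMatch}, yielding the $\epsilon_1$) and $\big\Vert\tfrac{1}{n}\sum x_ix_i^T-\mathbb{E}xx^T\big\Vert$, then use Lemma~\ref{lem:sample_norm_bound} to produce a norm bound that feeds a concentration inequality --- but differs in the concentration tool for the second piece. The paper invokes Rudelson's Theorem~3.1 (restated as Theorem~\ref{Rudelson}) as a black box: it plugs $M=\sqrt{2d\log(nd/\delta)}$ into $a=CM\sqrt{\log n/n}$, chooses $t$ to hit $\epsilon_2$ with failure $\delta/2$, and solves the resulting implicit inequality for $n$. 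You instead truncate explicitly, apply matrix Bernstein to the bounded summands, and bound the truncation bias $\big\Vert\mathbb{E}[xx^T\mathbf{1}\{\Vert x\Vert>R\}]\big\Vert$ separately. Both routes land on the same $\log^2(\cdot)\log^2(\cdot)$ shape for the same reason (one logarithm from the radius $R$, one from unrolling the self-referential inequality in $n$). Your version is arguably more careful: Rudelson's theorem as stated requires $\Vert x\Vert\le M$ \emph{almost surely}, which a Gaussian never satisfies, and the paper quietly treats the high-probability bound from Lemma~\ref{lem:sample_norm_bound} as if it were almost sure; your truncation-plus-bias step is precisely the missing justification. The trade-off is that the paper's proof is a few lines shorter because the Rudelson bound absorbs the whole concentration argument.
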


\begin{proof}
The concentration bound is obtained by dividing the error to two components,
{\footnotesize
\begin{eqnarray}
\left\Vert \hat{\Sigma}-\Sigma\right\Vert  \leq  \left\Vert \mu\mu^{T}-\hat{\mu}\hat{\mu}^{T}\right\Vert +
\left\Vert \frac{1}{n}\sum_{i=1}^{n}x_{i}x_{i}^{T}-\mathbb{E}xx^{T}\right\Vert,   \label{eq:cov_inq}
\end{eqnarray}
 }{\footnotesize \par}

We begin by bounding the first component: \\
Recall that $\mu=0$, so the first component is bounded by $\Vert \hat\mu \Vert^2$. We apply Lemma \ref{MeanMatch} and obtain that with probability at least $1-\frac{\delta}{2}$:
\begin{align}
& n_1 \geq \frac{2d}{\epsilon} \log \left( \frac{2d}{\delta} \right), \label{muComp}\\
& \Vert \hat\mu \Vert^2 \leq \epsilon_1. \nonumber
\end{align}

The second component is bounded by a concentration inequality for covariance matrices presented by \citet{rudelson2007sampling}. For completeness we add the relevant theorem; see Theorem \ref{Rudelson}.
The second moment condition holds by Assumption \ref{ass:GaussianMixture}. The second condition, of bounded sample norm  is obtained as follows.
By applying Lemma \ref{lem:sample_norm_bound} and bounding the variance according to  Assumption \ref{ass:GaussianMixture}, we get that
$\left\Vert x_{i}\right\Vert  \leq\sqrt{2d\log\left(\frac{n_2d}{\delta}\right)}.$

Next, we apply Theorem 3.1 of \cite{rudelson2007sampling} with $t^{2}=a^{2}\log\left(\frac{2}{\delta}\right)/c$
and $a=\epsilon_2\sqrt{c/\log\left(\frac{2}{\delta}\right)}$. This
results in the condition
$$a=\frac{\epsilon_2 c}{\sqrt{\log(\frac{2}{\delta})}}  \geq C\frac{\sqrt{2d\log\left(\frac{nd}{\delta}\right)\log(n)}}{\sqrt{n}},$$
 which is satisfied for the choice of\\
\begin{eqnarray}
n_2 & \geq & C\frac{d}{\epsilon_2^{2}}\log^{2}\left(\frac{2d}{\epsilon_2^{2}}\right)\log^{2}\left(\frac{2d}{\delta}\right).
\label{eq:Th1_cov_bound}
\end{eqnarray}
 We get the final sample bound by taking the maximum between the sample
complexity of the mean (\ref{muComp}) and the covariance estimation (\ref{eq:Th1_cov_bound}).
\end{proof}

\begin{proof}[Proof of Theorem \ref{Th:SampleComplexity}]
Observe that by applying Equation (\ref{zeroMeans}) to each class and outlook we have that each component has zero mean. By Lemma \ref{MeanMatch}, the sample complexity of this step is $n_i\geq \frac{2d}{\epsilon^2}\log\left( \frac{d}{\delta} \right)$ (for each class and outlook $i$). In the following derivations we assume zero mean of the components' distribution. We show that the sample complexity of both stages is dominated by the rotation.

By substituting the finite and infinite sample rotation matrices with
the values defined in Alg.~\ref{alg:Proc_2outlooks} and applying
the triangular inequality twice we have that
 \begin{align}
 & \left\Vert \hat{R}-R\right\Vert _{F}=\left\Vert \hat{V}\hat{U}^{T}-VU^{T}\right\Vert _{F}\nonumber \\
 & \leq\left\Vert V\right\Vert \left\Vert \Delta U\right\Vert +\left\Vert \Delta V\right\Vert \left\Vert \Delta U\right\Vert +\left\Vert \Delta V\right\Vert \left\Vert U\right\Vert \,,\label{eq:rot_proof_1}
 \end{align}

where $\Delta V=\hat{V}-V$ and $\Delta U=\hat{U}-U$. Recall that the matrices $U,\,\hat{U},\, V,\,\hat{V}$ are the matrices of singular
vectors resulting from the SVD decompositions $\hat{D}^{(2)}\hat{D}^{(1)T}=\hat{U}\hat{S}\hat{V}^{T}$
and $D^{(2)}D^{(1)T}=USV^{T}$. We apply the perturbation theory of the
SVD decomposition presented in \cite{Stewart1990Perturb}
and bound Eq.~(\ref{eq:rot_proof_1}) by
\begin{eqnarray}
\left\Vert \hat{R}-R\right\Vert _{F} & \leq C\left\Vert D^{(2)}D^{(1)T}-\hat{D}^{(2)}\hat{D}^{(1)T} \right\Vert _{F} & ,\label{eq:rot_proof_2}
\end{eqnarray}
where $C$ is a constant.
Observe that
\begin{align}
        &\left\Vert  D^{(2)}D^{(1)T}-\hat{D}^{(2)}\hat{D}^{(1)T}  \right\Vert _{F} \nonumber \\
  \leq  &\left\Vert D^{(2)}D^{(1)T}  - D^{(2)}\hat{D}^{(1)T}  \right\Vert _{F} + \left\Vert  D^{(2)}\hat{D}^{(1)T} - \hat{D}^{(2)}\hat{D}^{(1)T} \right\Vert _{F}\nonumber\\
  \leq  &\sqrt{h}\left\Vert  D^{(1)T}-\hat{D}^{(1)T} \right\Vert _{F}   + \sqrt{h}\left\Vert  D^{(2)} - \hat{D}^{(2)} \right\Vert _{F} \nonumber \\
  \doteq & \sqrt{h} \left( \Vert \Delta D_1 \Vert_F + \Vert \Delta D_2 \Vert_F   \right) \label{deltaDs}.
\end{align}
The second inequality holds by the sub-multiplicative property of the Frobenius norm. When the number of columns $h<d$, columns of zeros need to be added to make the matrices square.

Define $ \mathbf{v}_{l}^{(i)}$ and $\hat{\mathbf{v}}_{l}^{(i)}$ ($\, l=1,...,h$) to be the $h$ eigenvectors of matrices $D^{(i)}$ and $\hat{D}^{(i)}$ respectively.
The following holds
$\left\Vert \Delta D_{i}\right\Vert _{F}^{2}=\sum_{l=1}^{h}\left\Vert \hat{\mathbf{v}}_{l}^{(i)}-\mathbf{v}_{l}^{(i)}\right\Vert_2 ^{2}$
by definition.
Define the perturbation of the covariance matrix of mixture $i$ by $E_{i}=\Sigma_{i}-\hat{\Sigma}_{i}$.
By applying the perturbation theory of the eigen decomposition on the perturbed
covariance matrices \cite{Stewart1990Perturb} (p.240) we get that
$\left\Vert\hat{\mathbf{v}}_{l}^{(i)}-\mathbf{v}_{l}^{(i)} \right\Vert  \leq C\left\Vert E_{i}\right\Vert .$

Last, we use Lemma \ref{Cov_est} to bound $E_{i}$ for each outlook ($i=1,2$).
If the number of samples for each outlook is
$$n_i  \geq  C\frac{dh^2}{\epsilon_{i2}^{2}}\log^{2}\left(\frac{32dh^2}{\epsilon_{i2}^{2}}\right)\log^{2}\left(\frac{4hd}{\delta}\right)$$
then
$$P\left( \left\Vert \hat{\Sigma}_{i}-\Sigma_{i} \right\Vert \leq \frac{\epsilon_{i,1}+\epsilon_{i,2}}{4h} \right) \geq 1-\frac{\delta}{2h}, $$
which implies
$$P\left(\left\Vert \Delta D_{i}\right\Vert _{F} \leq \frac{\epsilon_{i,1}+\epsilon_{i,2}}{4\sqrt{h}}\right)   \geq  1-\frac{\delta}{2}. $$
Plugging in the bound to (\ref{deltaDs}) we get the final bound:
$$ P\left(\left\Vert D^{(2)}D^{(1)T}-\hat{D}^{(2)}\hat{D}^{(1)T} \right\Vert _{F} \leq  \epsilon \right)   \geq  1-\delta,$$
for some $\epsilon = \frac{1}{4} \sum_{i=1,2} \epsilon_{i,1}+\epsilon_{i,2} \in (0,1).$

\end{proof}

\begin{theorem}[Theorem 3.1 from \citep{rudelson2007sampling}]
\label{Rudelson}
Let $x$ be a random vector in $\mathbb{R}^{d}$ from distribution $D$, which is uniformly bounded almost everywhere: $\left\Vert x\right\Vert \le M$,
and $\left\Vert \mathbb{E}xx^{T}\right\Vert \le1$. Let $x_{1}...x_{n}$
be independent samples generated from $D$. Define
$$a  =  CM\sqrt{\frac{\log n}{n}},$$
where $C$ is an absolute constant. Then, for every $t\in(0,1)$,
$$P\left(\left\Vert \frac{1}{n}\sum_{i=1}^{n}x_{i}x_{i}^{T}-\mathbb{E}\left(xx^{T}\right)\right\Vert >t\right)  \leq  2e^{-ct^{2}/a^{2}}.$$
\end{theorem}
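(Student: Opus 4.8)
The plan is to prove this dimension-free concentration bound by the moment method, following the geometric-functional-analysis route that keeps the dimension $d$ out of the exponent entirely. Write $Z = \left\Vert \frac{1}{n}\sum_{i=1}^n (x_i x_i^T - \mathbb{E}xx^T)\right\Vert$. The goal is to establish a uniform moment-growth estimate $(\mathbb{E}Z^p)^{1/p} \leq C' a \sqrt{p}$ for every $p\geq 1$, and then convert this into the stated sub-Gaussian tail by optimizing Markov's inequality over $p$. This is the natural route to the exact form $2e^{-ct^2/a^2}$, since here the tail scale and the mean scale coincide (both $\sim a$), which is precisely the signature of the moment method rather than a concentration-around-the-mean argument.

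First I would symmetrize. Introducing an independent copy $x_i'$ of each sample together with a vector of independent Rademacher signs $\varepsilon_i$, the standard symmetrization inequality applied to $Y_i=\frac1n x_ix_i^T$ gives $\mathbb{E}Z^p \leq \frac{2^p}{n^p}\,\mathbb{E}\left\Vert \sum_i \varepsilon_i x_i x_i^T\right\Vert^p$, which reduces the problem to controlling a Rademacher sum of the rank-one matrices $x_i x_i^T$ conditionally on the data.

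The heart of the argument is the non-commutative Khintchine inequality of Lust-Piquard and Pisier, applied in Schatten-$p$ norm to the symmetric summands $A_i = x_i x_i^T$:
$$\left(\mathbb{E}_\varepsilon \left\Vert \sum_i \varepsilon_i A_i\right\Vert_{S_p}^p\right)^{1/p} \leq C\sqrt{p}\,\left\Vert \Big(\sum_i A_i^2\Big)^{1/2}\right\Vert_{S_p}.$$
Since $A_i^2 = \Vert x_i\Vert^2 x_i x_i^T \preceq M^2 x_i x_i^T$, we have $\sum_i A_i^2 \preceq M^2 \sum_i x_i x_i^T$, so the right-hand side is at most $C\sqrt{p}\,M\,\big\Vert \sum_i x_i x_i^T\big\Vert_{S_p}^{1/2}$. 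Because $\sum_i \varepsilon_i x_i x_i^T$ has rank at most $n$, choosing the Schatten exponent $p \asymp \log n$ converts the Schatten norm back to the operator norm at the cost of a factor $n^{1/p} = O(1)$, yielding Rudelson's lemma $\mathbb{E}_\varepsilon\big\Vert \sum_i \varepsilon_i x_i x_i^T\big\Vert \leq C\sqrt{\log n}\,M\,\big\Vert \sum_i x_i x_i^T\big\Vert^{1/2}$. This is where the $\sqrt{\log n}$ in the definition of $a$ is born, and it is the step I expect to be the main obstacle: it carries the genuine operator-norm content of the theorem, and the whole dimension-free character of the bound hinges on replacing a factor of $d$ by this $\log n$ via the rank bound.

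It then remains to close a self-referential estimate and pass to the tail. Taking expectations over the data, using Jensen to pull the square root outside, and invoking the hypothesis $\Vert\mathbb{E}xx^T\Vert\leq 1$ to bound $\big\Vert\sum_i x_i x_i^T\big\Vert \leq nZ + n$, the conditional bound produces an inequality of the shape $\mathbb{E}Z \leq C a\,(\mathbb{E}Z + 1)^{1/2}$, which solves to $\mathbb{E}Z \leq C'' a$ once $n$ is large enough that $a$ is small (the regime relevant for $t\in(0,1)$). Carrying the identical computation at the level of the $p$-th moment, with the $\sqrt{p}$ supplied by Khintchine, gives $(\mathbb{E}Z^p)^{1/p}\leq C' a\sqrt{p}$. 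Finally, Markov's inequality $P(Z>t)\leq (C'a\sqrt{p}/t)^p$ optimized over $p$ (taking $p \asymp t^2/a^2$) yields $P(Z>t)\leq 2e^{-ct^2/a^2}$, where the trailing factor $2$ absorbs the small-$t$ regime in which the optimal $p$ falls below $1$ and the estimate would otherwise be vacuous.
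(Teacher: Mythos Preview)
The paper does not prove this statement at all: Theorem~\ref{Rudelson} is quoted verbatim from \citet{rudelson2007sampling} and invoked as a black box inside the proof of Lemma~\ref{Cov_est}. So there is no ``paper's own proof'' to compare against.

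That said, your sketch is essentially the original Rudelson--Vershynin argument and is correct in outline: symmetrization, the non-commutative Khintchine inequality of Lust-Piquard--Pisier applied to the rank-one summands, the rank-$n$ trick to pass from Schatten-$p$ to the operator norm at cost $n^{1/p}=O(1)$ when $p\asymp\log n$, the self-referential inequality $\mathbb{E}Z\le Ca(\mathbb{E}Z+1)^{1/2}$ coming from $\Vert\mathbb{E}xx^T\Vert\le 1$, and finally Markov on moments to extract the sub-Gaussian tail. One point worth tightening: you use the symbol $p$ both for the Schatten exponent (which you want $\asymp\log n$) and for the moment order you later optimize over (which you want $\asymp t^2/a^2$). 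In the actual argument one takes the Schatten exponent $q=\max(p,\log n)$, so that $n^{1/q}\le e$ and the Khintchine constant becomes $C\sqrt{q}\le C(\sqrt{p}+\sqrt{\log n})$; this is what produces the moment growth $(\mathbb{E}Z^p)^{1/p}\lesssim a\sqrt{p}+a$ and hence the tail $2e^{-ct^2/a^2}$. With that clarification the argument goes through.
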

\newpage
\subsection{Domain Adaptation setup - Results}
\label{Res:DA}
Following are results obtained on all users for the domain adaptation experiment. Table  \ref{Tb:twoUsersDA} presents the results for two users obtained on $5\%$ labeled target data. Table \ref{Tb:MultiUsersDA} presents the results for multi-source domain adaptation with three users, each with $15\%$ labeled data. Both tables contain the balanced error rate (BER) on the five class classification task. Highlighted results represent significance of the result with p-value$<0.05$.

\begin{table}[h!]
\caption{Domain Adaptation setup for two users \\($5\%$ labeled Target).}
\label{Tb:twoUsersDA}
{ \small
\begin{tabular}{|c|l|l|l|l|l|}
\hline
                        \multicolumn{6}{c}{}
                        \\\hline
                    S $\rightarrow$ T  &   MOMAP   & FEDA   & TRG &  SRC  & ALL  \\
                    \hline
                     2 $\rightarrow$  1 & {\bf 0.208} & 0.280 &0.249  &0.255  &0.234\\
                     3 $\rightarrow$  1 & 0.228       & 0.292 &0.269  &0.209  &{\bf 0.2}\\
                     4 $\rightarrow$  1 & {\bf 0.221} & 0.293 & 0.256 & 0.246 & 0.233 \\
                     5 $\rightarrow$  1 &       0.21  & 0.304 & 0.27  & 0.23  & 0.216 \\
                     6 $\rightarrow$  1 &      0.255  & 0.294 & 0.265 & 0.345 & 0.283 \\
                     1 $\rightarrow$  2 &      0.20   & 0.29  & 0.26  & 0.21  & 0.20 \\
                     3 $\rightarrow$  2 & 0.212       & 0.281 & 0.253 & 0.215 & 0.205\\
                     4 $\rightarrow$  2 & {\bf 0.186} & 0.287 & 0.252 & 0.216 & 0.209 \\
                     5 $\rightarrow$  2 & {\bf 0.191} & 0.281 & 0.249 & 0.223 & 0.208\\
                     6 $\rightarrow$  2 & {\bf 0.203} & 0.27  & 0.244 & 0.352 & 0.271\\
                     1 $\rightarrow$  3 & {\bf 0.216} & 0.281 & 0.26  & 0.23  & 0.224\\
                     2 $\rightarrow$  3 & {\bf 0.214} & 0.271 & 0.256 & 0.265 & 0.241\\
                     4 $\rightarrow$  3 &       0.215 & 0.276 & 0.252 & 0.233 & 0.222\\
                     5 $\rightarrow$  3 & {\bf 0.213} & 0.298 & 0.264 & 0.278 & 0.237\\
                     6 $\rightarrow$  3 & {\bf 0.210} & 0.276 & 0.251 & 0.359 & 0.282\\
                     1 $\rightarrow$  4 & {\bf 0.233} & 0.277 & 0.256 & 0.309 & 0.253\\
                     2 $\rightarrow$  4 & {\bf 0.231} & 0.269 & 0.264 & 0.314 & 0.265\\
                     3 $\rightarrow$  4 & 0.245 & 0.281 & 0.27  & 0.276 & 0.249\\
                     5 $\rightarrow$  4 & 0.235 & 0.289 & 0.27  & 0.313 & 0.246\\
                     6 $\rightarrow$  4 & {\bf 0.243} & 0.267 & 0.262 & 0.422 & 0.293\\
                     1 $\rightarrow$  5 &   0.228 & 0.307 & 0.272 & 0.244 & 0.237\\
                     2 $\rightarrow$  5 & {\bf 0.237} & 0.29  & 0.275 & 0.289 & 0.267\\
                     3 $\rightarrow$  5 &       0.233 & 0.289 & 0.261 & 0.239 & 0.228\\
                     4 $\rightarrow$  5 & {\bf 0.22}  & 0.286 & 0.258 & 0.258 & 0.243\\
                     6 $\rightarrow$  5 & {\bf 0.221} & 0.269 & 0.247 & 0.3   & 0.259\\
                     1 $\rightarrow$  6 & {\bf 0.234} & 0.376 & 0.321 & 0.294 & 0.273\\
                     2 $\rightarrow$  6 & {\bf 0.238} & 0.37  & 0.316 & 0.305 & 0.273\\
                     3 $\rightarrow$  6 & 0.254       & 0.386 & 0.344 & 0.261 & 0.247 \\
                     4 $\rightarrow$  6 & {\bf 0.235} & 0.374 & 0.326 & 0.294 & 0.263 \\
                     5 $\rightarrow$  6 & 0.244 & 0.379 & 0.325 & 0.246 & 0.239\\

\hline
\end{tabular}
}
\end{table}

\begin{table}
\caption{Domain Adaptation setup - Multi-users \\($15\%$ labeled Target). }
\label{Tb:MultiUsersDA}
{ \small
\begin{tabular}{|c|l|l|l|l|}
\hline
                        \multicolumn{5}{c}{} \\
\hline

                    Users   &   MOMAP   & FEDA   & TRG   & ALL  \\
\hline
                    1 2 3   & {\bf 0.205} & 0.232 & 0.227 & 0.214 \\
                    1 2 4   & {\bf 0.203} & 0.235 & 0.224 & 0.214\\
                    1 2 5   & {\bf 0.203} & 0.236 & 0.22  & 0.213\\
                    1 2 6   & {\bf 0.211} & 0.253 & 0.238 & 0.226\\
                    1 3 4   & {\bf 0.207} & 0.233 & 0.224 & 0.22\\
                    1 3 5   &  0.208      & 0.24  & 0.226 & 0.21 \\
                    1 3 6   & {\bf 0.221} & 0.255 & 0.239 & 0.228\\
                    1 4 5   & {\bf 0.208} & 0.237 & 0.223 & 0.219\\
                    1 4 6   & {\bf 0.214} & 0.252 & 0.236 & 0.232\\
                    1 5 6   & {\bf 0.222} & 0.257 & 0.239 & 0.228\\
                    2 3 4   & 0.214       & 0.234 & 0.229 & 0.216 \\
                    2 3 5   & {\bf 0.21}  & 0.235 & 0.228 & 0.215\\
                    2 3 6   & {\bf 0.218} & 0.243 & 0.236 & 0.225\\
                    2 4 5   & {\bf 0.204} & 0.233 & 0.221 & 0.212\\
                    2 4 6   & {\bf 0.216} & 0.254 & 0.239 & 0.232\\
                    2 5 6   & 0.226       & 0.257 & 0.243 & 0.226\\
                    3 4 5   & 0.219       & 0.239 & 0.231 & 0.222 \\
                    3 4 6   & {\bf 0.224} & 0.258 & 0.244 & 0.235\\
                    3 5 6   &  0.227      & 0.254 & 0.239 & 0.225 \\
                    4 5 6   & {\bf 0.222} & 0.252 & 0.242 & 0.232\\

\hline
\end{tabular}
}
\end{table}

\newpage
\subsection{Multiple outlook setup - Experiment 3}
\label{Res:MO}
In the third experiment we constructed the feature representation of each outlook from the 33 accelerometer's Fourier coefficients to which we added 10 features of random Gaussian noise $\mathcal{N}(0,1)$. We then randomly permuted the order of the features of each outlook. For this experiment, we used samples belonging to the walking, running and lingering classes, as we did not use the full feature set. The experiment was performed for the two outlook scenario as well as for multiple outlooks.

Figure \ref{fig:diffFq} shows the results for $5\%$ labeled target data for different users couples. It can be observed, that for the walking and lingering activities the mapped outlook performs similarly to the TRG classifier. For all cases, the mapped outlook classifies the running activity with least errors. Among all user pairs the MOMAP classifier obtained smaller error for the running activity ($3.5\%-45\%$ smaller for $5\%$ labeled data). The results show the boosting power of the mapping, which, as may be expected, is most powerful for the classes with less labeled data. An interesting behavior is that even when all labeled data is available the MOMAP algorithm sometimes outperforms the classifier learned in the target outlook (OPT). This may be caused by some regularization obtained by the mapping. Note, however, that for the total error, on all three classes, the MOMAP classifier does not outperform OPT classifier.
The results for multiple outlooks are presented in Figure \ref{fig:MultnoisePerm}. It can be observed that the mapping aids in learning the mixture.

\begin{figure}[h!]
\centering
\subfigure[User 2 $\rightarrow$ User 1 ]{
   \includegraphics[trim = 12mm 0mm 0mm 0mm, clip,width=0.22\textwidth,height=0.21\textwidth] {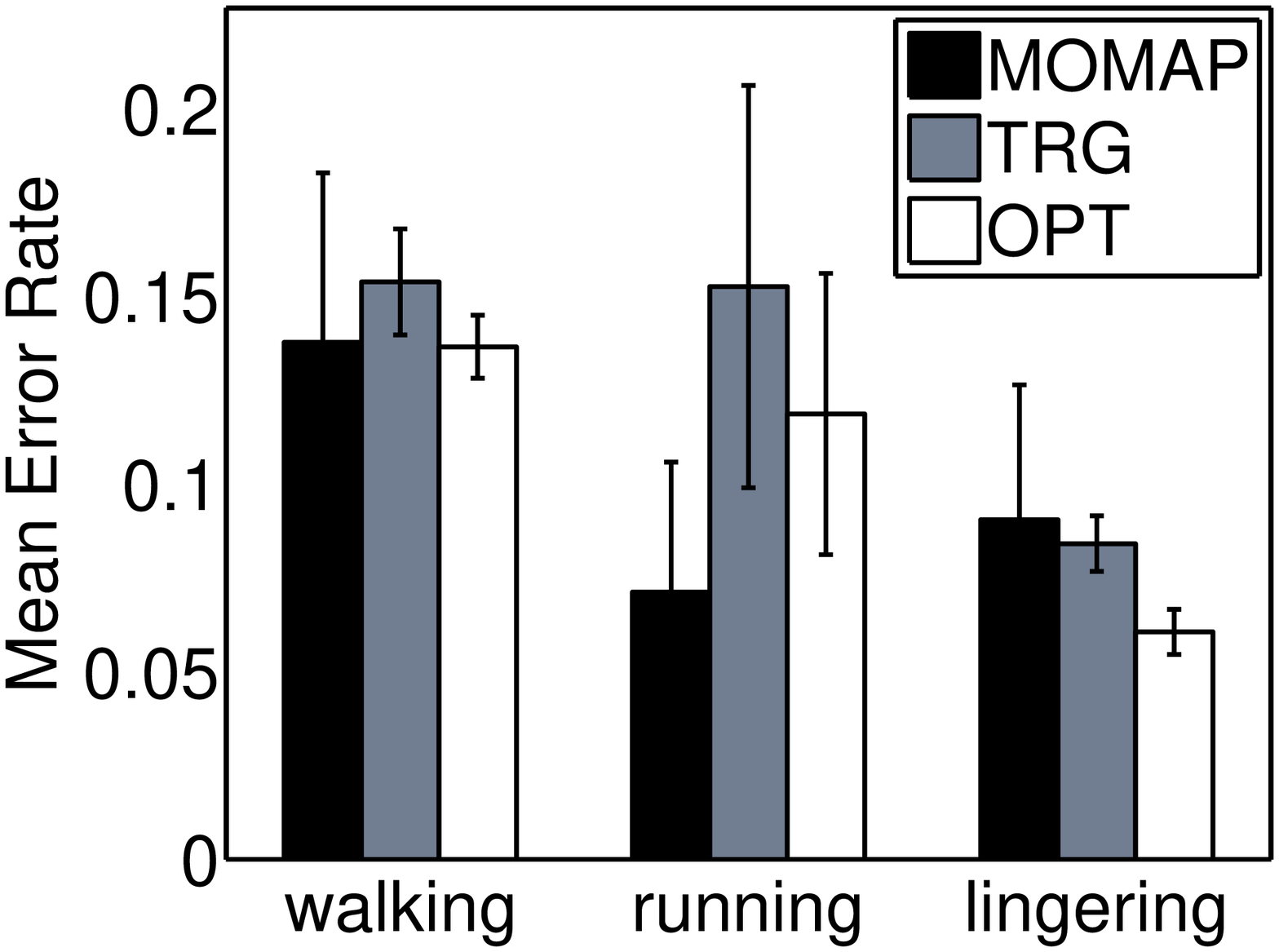}
   \label{subfig:12noisePerm}
 }
\subfigure[User 1 $\rightarrow$ User 4 ]{
   \includegraphics[trim = 12mm 0mm 0mm 0mm, clip,width=0.22\textwidth,height=0.21\textwidth] {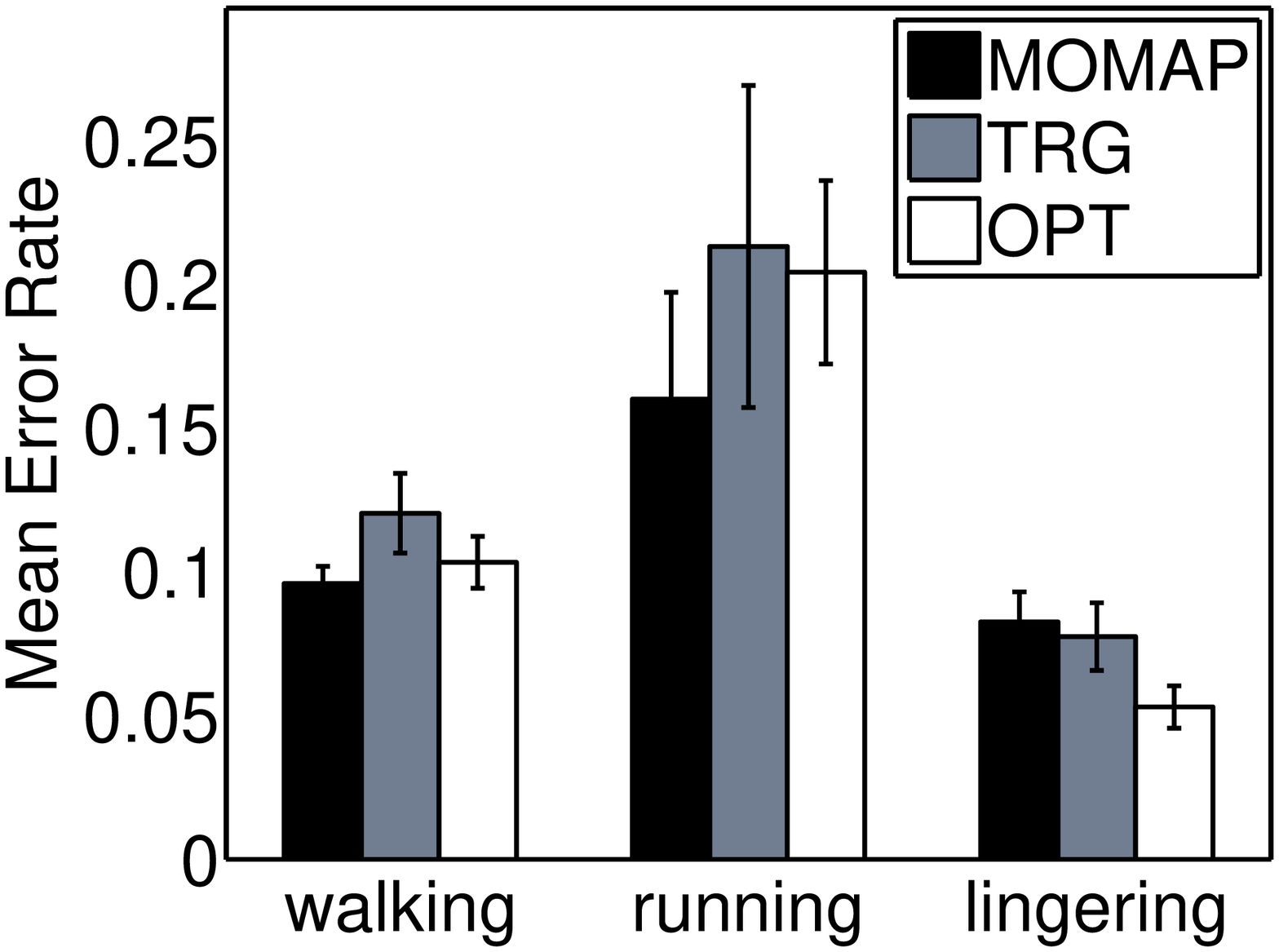}
   \label{subfig:41noisePerm}
 }
\subfigure[User 4 $\rightarrow$ User 5]{
   \includegraphics[trim = 12mm 0mm 0mm 0mm, clip,width=0.22\textwidth,height=0.21\textwidth] {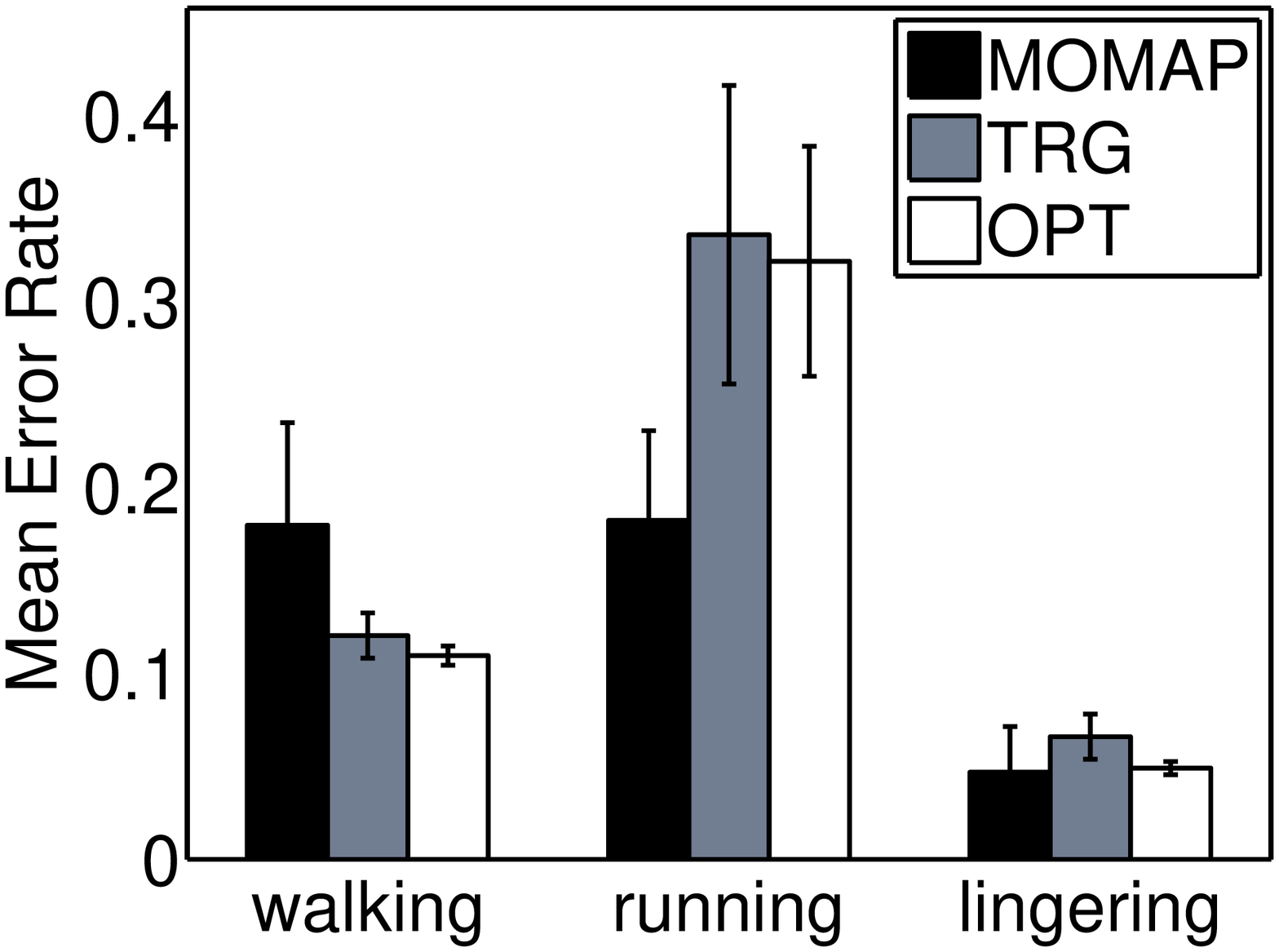}
   \label{subfig:54noisePerm}
 }
 \subfigure[User 5 $\rightarrow$ User 6]{
   \includegraphics[trim = 12mm 0mm 0mm 0mm, clip,width=0.22\textwidth,height=0.21\textwidth] {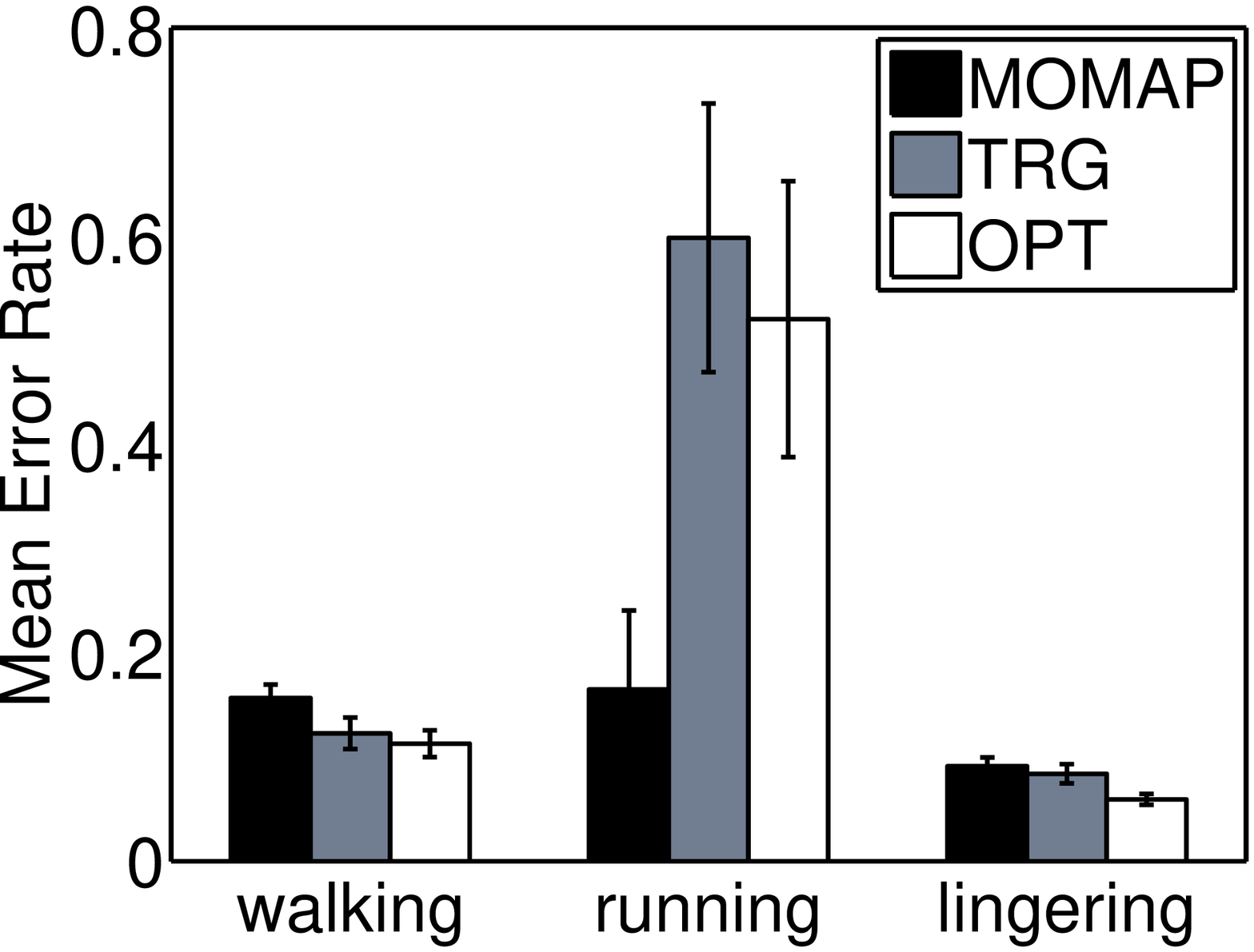}
   \label{subfig:65noisePerm}
 }
\caption{Multiple outlook setup  for two outlook with added noise features and randomly permuted features.\label{fig:noisePerm}}
\end{figure}

\begin{figure}[h!]
\centering
\subfigure[Users 1,4 and 6]{
   \includegraphics[width=0.3\textwidth] {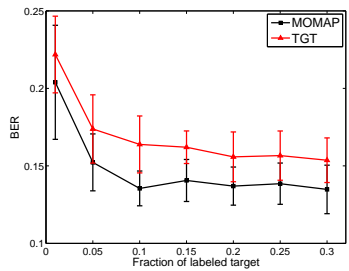}
   \label{subfig:146noisePerm}
 }
 \subfigure[Users 2,3 and 5]{
   \includegraphics[width=0.3\textwidth] {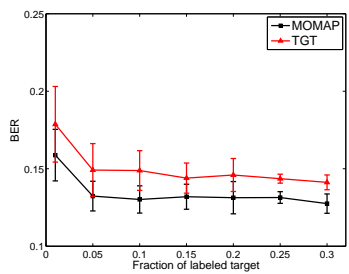}
   \label{subfig:146noisePerm}
 }

\caption{Multiple outlooks learning for mixture of $m=3$ outlooks. Noise features are added to each outlook and then the features are randomly permuted.\label{fig:MultnoisePerm}}
\end{figure}

\end{document}